\newtheorem{theorem}{Theorem}
\newtheorem{lemma}{Lemma}
\begin{document}
%
\title{Robust Locality-Aware Regression \\for Labeled Data Classification}
%
%
%

\author{Liangchen~Hu and~Wensheng~Zhang
\thanks{This work was supported in part by the National Key R$\&$D Program of China under Grant 2017YFC0803700, and in part by the National Natural Science Foundation of China under Grant U1636220 and Grant 61876183. \it{(Corresponding author: Wensheng Zhang.)}}
\thanks{L. Hu is with the School of Computer Science and Engineering, Nanjing University of Science and Technology, Nanjing 210094, China (e-mail: hlc\_clear@foxmail.com).}
\thanks{W. Zhang is with the Research Center of Precision Sensing and Control, Institute of Automation, Chinese Academy of Sciences, Beijing, 100190, China, and also with the University of Chinese Academy of Sciences, Beijing, 101408, China (e-mail: zhangwenshengia@hotmail.com).}
\thanks{Manuscript received April 19, 2005; revised August 26, 2015.}
}
\maketitle

\begin{abstract}
With the dramatic increase of dimensions in the data representation, extracting latent low-dimensional features becomes of the utmost importance for efficient classification. Aiming at the problems of unclear margin representation and difficulty in revealing the data manifold structure in most of the existing linear discriminant methods, we propose a new discriminant feature extraction framework, namely Robust Locality-Aware Regression (RLAR). In our model, we introduce a retargeted regression to perform the marginal representation learning adaptively instead of using the general average inter-class margin. Besides, we formulate a new strategy for enhancing the local intra-class compactness of the data manifold, which can achieve the joint learning of locality-aware graph structure and desirable projection matrix. To alleviate the disturbance of outliers and prevent overfitting, we measure the regression term and locality-aware term together with the regularization term by the $L_{2,1}$ norm. Further, forcing the row sparsity on the projection matrix through the $L_{2,1}$ norm achieves the cooperation of feature selection and feature extraction. Then, we derive an effective iterative algorithm for solving the proposed model. The experimental results over a range of UCI data sets and other benchmark databases demonstrate that the proposed RLAR outperforms some state-of-the-art approaches.
\end{abstract}

\begin{IEEEkeywords}
Locality-aware graph learning, margin representation learning, joint $L_{2,1}$-norms sparsity, feature selection and extraction.
\end{IEEEkeywords}

%
\IEEEpeerreviewmaketitle

\section{Introduction}
%
%
%
%
\IEEEPARstart{E}{xcessive} dimensionality leads to high storage overhead, heavy computation and huge time consumption in the training process of machine learning. And, as the ambient space expands exponentially with the increase of dimensionality, the proportion of training data in the whole data space drops sharply, thus resulting in the worse generalization of the training model \cite{Ladla2011,Ma2018}. A significant way to address these issues is dimensionality reduction (DR) \cite{Passalis2018,Pang2019}, which transforms the original high-dimensional spatial data into a low-dimensional subspace by some resultful means. Feature selection (FS) and feature extraction (FE) are two main techniques for processing the DR problems of high-dimensional data \cite{Khalid2014}. FS aims at learning a way for eliminating redundant features in the original space, while FE gains compact features with stronger recognition through recombination of original features in the process of spatial transformation. By contrast, FE is a more powerful means.

Among the many FE methods, principal component analysis (PCA) \cite{Hotelling1933} and linear discriminant analysis (LDA) \cite{Fisher1936} are two commonly-used unsupervised and supervised techniques respectively. PCA maximizes the divergence of the mapped data without considering label information during training, while LDA makes the mapped samples have better classification performance based on discriminability. Unfortunately, the mean dependence of LDA makes it incapable of revealing the data manifold structure, thus resulting in poor classification performance. Otherwise, multiple clusters usually happens to be formed in the same class \cite{Sugiyama2007}, such as odd-even classification of handwritten digits, multiple triggers of a single disease, etc., such data conforms to non-Gaussian distribution \cite{Boutemedjet2009,Luo2018,Li2017}, which challenges LDA of Gaussian distribution hypothesis. Although improved for diversification purposes, many variants of LDA still inherit this defect due to the problem of unchanged mean dependence, for example, orthogonal LDA (OLDA) \cite{Ye2006}, maximum margin criterion (MMC) \cite{Li2006}, sparse uncorrelated LDA (SULDA) \cite{Zhang2016}, robust LDA (RLDA) \cite{Zhao2019}, robust sparse LDA (RSLDA) \cite{Wen2019}, etc. To appropriately reduce the dimensionality of data and improve the computational efficiency while maintaining high classification performance, preserving the local manifold structure is crucial to success. Inspired by this, Sugiyama improved LDA's scatter loss into the form of sample pairs by combining the idea of locality preserving projections (LPPs) \cite{He2004}, namely local Fisher discriminant analysis (LFDA) \cite{Sugiyama2007}. Cai et al. \cite{Cai2007} proposed locality sensitive discriminant analysis (LSDA), which aims to mine the potential subspace in the way of perceiving the local geometric structure of data manifold, in which the nearby samples having the same label are close to each other instead of being far away from each other. Similarly, Nie et al. \cite{Nie2007} proposed neighborhood minmax projections (NMMPs) formulated by pairwise samples and derived an effective strategy for solving trace ratio optimization problem. Besides, Fan et al. \cite{Fan2011} presented an improved LDA framework, the local LDA (LLDA), which can perform well without satisfying the assumption of Gaussian distribution.

Considering the aforementioned methods in a unified graph embedding framework, they differ in the way of constructing the graphs within and between classes, including the connection and affinity of the sample pairs. In the view of data manifold recovery \cite{Seung2000,Belkin2002}, the sample relationships in the intra-class graph constructed in the original space are expected to be preserved completely in the low-dimensional embedding space.
However, the linear projections used in these methods make nonlinear manifold recovery almost impossible. In other words, the intra-class graph constructed in the original space is not optimal. Therefore, some new strategies have emerged to try to learn the optimal intra-class graphs while learning the optimal projections. On the premise that the affinity of intra-class samples satisfies the quadratic condition, Li et al. \cite{Li2017} studied the full-connection intra-class graph, and proposed locality adaptive discriminant analysis (LADA) which can well reveal the potential data manifold structure. Luo et al. \cite{Luo2018} proposed another adaptive discriminant analysis (ADA), which formulates the intra-class affinity loss in the form of heat kernel function and approximates it by quadratic model. Instead of investigating fully connected intra-class graph, Pang et al. \cite{Pang2019} aims at simultaneously learning neighborship and projection matrix (SLNP). Recently, Nie et al. \cite{Nie2019} put forward submanifold-preserving discriminant analysis (SPDA) with an auto-optimized k-nearest neighbor (KNN) graph, which differs from SLNP in that it considers only the connection of nearby samples. The data manifold has the property of local Euclidean homeomorphism, which makes the partial connected graph superior to the fully connected graph in revealing the manifold structure. Besides, considering only the connection information and not the affinity between samples, the model is easily affected by outliers.

Aside from preserving the intra-class structure, inter-class separability is also an indispensable part of achieving better classification performance. Establishing an effective margin representation facilitates the separability between classes. Here, we classify the commonly-used margin representations in the current mainstream DR strategies or classifiers into four categories, namely, average margin, weighted average margin, maximum margin and induced margin. Maximizing the inter-class scatter or global scatter in LDA and some of its variants is the pursuit of average margin. Some DR methods based on manifold learning, such as LSDA and stable orthogonal local discriminant embedding (SOLDE) \cite{Gao2013}, with the construction of inter-class graph, aim to achieve the weighted average margin. The maximum margin is typically used in the design of classifiers, such as the maximum margin hyperplane determined by the support vectors in support vector machines (SVMs) \cite{Brereton2010,Hsu2002}. Besides, least-squares regression (LSR), as a fundamental tool in statistics, can also be regarded as a strategy of margin representation. The purpose of margin representation learning can be achieved by guiding the samples of different classes towards disparate pre-set targets. Here, we define such an margin as the induced margin. Over the past decades, various regression analysis methods have been developed, such as ridge regression (RR) \cite{Hoerl1970}, lasso regression \cite{Tibshirani1996}, elastic net regression \cite{Zou2005}, generalized robust regression (GRR) \cite{Lai2019}, adaptive locality preserving regression (ALPR) \cite{Wen2020} and some kernel based regression methods \cite{Rosipal2001,Liu2009,Chen2016}. Most of these methods take the zero-one labels as the regression targets. However, the strict zero-one targets are too harsh on the marginal representation to yield superior classification performance. To remedy this deficiency, the trend is to learn relaxed regression targets instead of the original zero-one targets, with some representative methods such as discriminative LSR (DLSR) \cite{Xiang2012}, retargeted LSR (ReLSR) \cite{Zhang2015}, and groupwise ReLSR \cite{Wang2018}. Among them, ReLSR has been used for the marginally structured representation learning (MSRL) \cite{Zhang2018} and has been successful.

In real life, the collected data may be doped with some noise or outliers. However, in the conventional FE methods, the squared $L_2$ norm, which tends to enlarge the influence of outliers, is generally employed to measure the loss. Of course, we can mitigate this risk by measuring angles instead of distances, as the angle linear discriminant embedding (ALDE) \cite{Liu2015} does. Besides, to suppress the sensitivity of squared $L_2$ norm, some new evaluation criteria based on the $L_1$ norm are combined on PCA and LDA, including $L_{1}$-PCA \cite{Ke2005}, $R_1$-PCA \cite{Ding2006}, PCA-$L_{1}$ \cite{Kwak2008}, LDA-$R_1$ \cite{Li2010}, sparse discriminant analysis (SDA) \cite{Clemmensen2011}, LDA-$L_1$ \cite{Zhong2013,Wang2014}, etc. Since the optimization of $L_1$-norm-based loss function is relatively troublesome, the more efficient $L_{2,1}$ norm gradually attracts the attention of researchers. By imposing the $L_{2,1}$ norm on both the regression loss term and the regularization term, Nie et al. \cite{Nie2010} proposed an efficient and robust feature selection method (RFS). Inspired by this, some new formulations of PCA and LDA with $L_{2,1}$ norm have been proposed successively, and joint sparse PCA (JSPCA) \cite{Yi2017}, adaptive weighted sparse PCA (AW-SPCA) \cite{Yi2019}, L21SDA \cite{Shi2014} and RLDA \cite{Zhao2019} are the representative ones. Unfortunately, RLDA only employs the $L_{2,1}$ norm for the measure of intra-class scatter, thus resulting in limited effect on suppressing outliers.

Taken together, preserving the intrinsic structure of the data manifold and forcing the separation between classes are crucial to generalization and classification performance of feature extraction. Combining with the excellent properties of $L_{2,1}$ norm on resisting outliers and removing redundant features, we propose a novel discriminant feature extraction algorithm with flexible learning of intra-class structure and margin representation, which has the following advantages.
\begin{enumerate}
  \item By integrating locality-aware graph learning and flexible margin representation learning, we build a new discriminant learning criterion, which enhances intra-class compactness while allowing for flexible intra-class and inter-class differences.
  \item The flexible locality-aware structural learning strategy formulated by us is capable of revealing the local adjacency structure of the intra-class samples in the desired subspace.
  \item Joint $L_{2,1}$ norms in the proposed model can not only relieve the pressure brought by outliers or noise, but also conduct feature selection and subspace learning simultaneously.
  \item We theoretically prove the convergence of the proposed algorithm and experimentally verify its superior classification and generalization performance on multiple databases.
\end{enumerate}

The remainder of our paper is organized as follows. In Section \ref{related_work}, we briefly describe some notations and review some of the work. In Section \ref{methodology}, we elaborate the process of establishing the model and derive an effective algorithm. In Section \ref{algorithm_analysis}, we provide theoretical analysis of the proposed algorithm, including convergence proof and computational complexity analysis. In Section \ref{experiments}, we investigate the performance of the proposed algorithm through a series of comparative experiments. Section \ref{conclusion} concludes the paper with some additional summary.

\section{Related Work}\label{related_work}
Briefly, some notations in our writing are described in this section. Throughout, all the vectors and matrices we defined are in bold italics, and all other variables are in single italics. Given a data matrix $\bm X\!=\![\bm X_1,\bm X_2,\cdots,\bm X_n]\!\in\!\mathbb{R}^{d\times n}$, where $\bm X_i$ refers to a sample instance. Assuming that $\bm X$ can be classified into $c$ classes, we write $\bm X^i\!=\![\bm X^i_1,\bm X^i_2,\cdots,\bm X^i_{n_i}]\!\in\!\mathbb{R}^{d\times n_i}$ as the data submatrix of class $i$. Moreover, $\bm X^T$, $\bm X^{-1}$ and $Tr(\bm X)$ represent $\bm X$'s transpose, inverse and trace, respectively. And, we denote the matrix $[\bm A;\bm B]\!\in\!\mathbb{R}^{(p+q)\times t}$ as the composition of $\bm A$ and $\bm B$, where $\bm A\!\in\!\mathbb{R}^{p\times t}$ and $\bm B\!\in\!\mathbb{R}^{q\times t}$. Some commonly-used norms, such as the Frobenius norm, $L_2$ norm, and $L_{2,1}$ norm, are defined as $\|\bm X\|_F=\sqrt{\sum_{i,j}X_{ij}^2}$, $\|\bm X_{i,:}\|_2=\sqrt{\sum_jX_{ij}^2}$, and $\|\bm X\|_{2,1}=\sum_i\|\bm X_{i,:}\|_2$, respectively. Using these notations, we redescribe several of the work related to our research below.
\subsection{LDA}
LDA projects the high-dimensional data $\bm X$ into the low-dimensional latent space through a linear mapping $f(\bm X)=\bm W^T\bm X$, which aggregates the intra-class samples to the intra-class mean while maximizing the discrepancy between the inter-class means as follows
\begin{equation}\label{eq:LDA}
\max_{\bm W}\frac{\sum_{i=1}^cn_i\left\|\bm W^T(\bm M_i-\bm M)\right\|_2^2}{\sum_{i=1}^c\sum_{j=1}^{n_i}\left\|\bm W^T(\bm X^i_j-\bm M_i)\right\|_2^2}
\end{equation}
where $\bm M_i=\frac{1}{n_i}\sum_{j=1}^{n_i}\bm X^i_j$ refers to the intra-class mean of class $i$ and $\bm M\!=\!\frac{1}{n}\sum_{i=1}^n\bm X_i$ denotes the mean of all the samples.

Denoting the intra-class scatter matrix and inter-class scatter matrix as $\bm S_w\!=\!\sum_{i=1}^c\sum_{j=1}^{n_i}(\bm X^i_j-\bm M_i)(\bm X^i_j-\bm M_i)^T$ and $\bm S_b\!=\!\sum_{i=1}^cn_i(\bm M_i-\bm M)(\bm M_i-\bm M)^T$ respectively, we can rewrite Eq.~(\ref{eq:LDA}) as the following trace ratio problem
\begin{equation}\label{eq:LDA:trace}
  \max_{\bm W}\frac{Tr(\bm W^T\bm S_b\bm W)}{Tr(\bm W^T\bm S_w\bm W)}.
\end{equation}
Solving problem (\ref{eq:LDA:trace}) is equivalent to solving the following generalized eigendecomposition problem
\begin{equation}\label{eq:LDA:solution}
  \bm S_b\bm W=\bm S_w\bm W\bm \Lambda.
\end{equation}
Since Eq.~(\ref{eq:LDA:solution}) involves matrix inverse, LDA suffers from the small-sample-size problem \cite{Raudys1991}. From the definitions of intra-class scatter and inter-class scatter, LDA treats each sample equally, which causes the samples far from the mean to have a greater impact on the model. Moreover, LDA can handle the non-gaussian data \cite{Li2017,Luo2018} hardly because of the mean dependence.
\subsection{LSR and ReLSR}
Here, we briefly review the classical least squares regression (LSR) model with a class indicator matrix $\bm Y\!=\![\bm Y_1,\bm Y_2,\cdots,\bm Y_n]^T\!\in\!\mathbb{R}^{n\times c}$ which is assigned to the data matrix $\bm X$. Assuming that the linear mapping between the row vectors in $\bm Y$ and the column vectors in $\bm X$ is $\bm Y_i=\bm W^T\bm X_i+\bm b$ where $\bm W\in\mathbb{R}^{d\times c}$ refers to the regression matrix and $\bm b\in\mathbb{R}^{c\times 1}$ is a bias vector, we can obtain unbiased estimates of $\bm W$ and $\bm b$ by solving the following objective function
\begin{equation}\label{eq:LSR}
  \min_{\bm W,\bm b}\sum_{i=1}^n\left\|\bm W^T\bm X_i+\bm b-\bm Y_i\right\|_2^2.
\end{equation}
Conventionally, the indicator matrix $\bm Y$ is a strict zero-one matrix in which only the $l_i$-th entry of each row is one, where $l_i\!\in\!\{1,2,\cdots,c\}$ is the class label of sample $\bm X_i$. In reality, however, due to the diversity of data sampled from various distributions, strict zero-one indicators do not make sense and may be detrimental to classification.

To overcome this problem, Zhang et al. proposed the retargeted least squares regression (ReLSR) \cite{Zhang2015}, a method of learning targets flexibly from regression results, while maintaining a certain discriminant power. The joint learning framework of linear mapping and regression target of ReLSR is as follows
\begin{equation}\label{eq:ReLSR}
\begin{split}
  &\min_{\bm W,\bm b,\bm T}\left\|\bm X^T\bm W+\bm 1_n\bm b^T-\bm T\right\|_F^2+\beta\|\bm W\|_F^2\\
  &s.t.~T_{il_i}-\max_{j\neq l_i}T_{ij}\geq 1, i=1,2,\cdots,n
\end{split}
\end{equation}
where $\bm 1_n\!\in\!\mathbb{R}^{n\times 1}$ refers to a column vector with all 1s, $\bm T\in\mathbb{R}^{n\times c}$ represents a retargeted matrix and $\beta$ is a regularization parameter that controls the degree of bias. Actually, (\ref{eq:ReLSR}) can be regarded as a ridge regression (RR) \cite{Hoerl1970} of adaptive learning regression targets with a margin constraint on the true class and the most likely false class. By solving problem (\ref{eq:ReLSR}), we can achieve a more accurate classification than solving problem (\ref{eq:LSR}).
\subsection{RFS}
Besides the inflexible predefined targets, another drawback of LSR is that its loss function is in the form of squared Frobenius norm, which inevitably results in sensitivity to outliers or noise. To address this issue, Nie et al. proposed an efficient and robust feature selection (RFS) method \cite{Nie2010}, which avoids the dilemma by jointly minimizing the $L_{2,1}$-norms. The objective function of RFS with our notations can be written as follows
\begin{equation}\label{eq:RFS}
  \min_{\tilde{\bm W}}\frac{1}{\gamma}\left\|\tilde{\bm X}^T\tilde{\bm W}-\bm Y\right\|_{2,1}+\|\tilde{\bm W}\|_{2,1}
\end{equation}
where $\tilde{\bm W}$ absorbing the offset vector $\bm b$ is denoted as $[\bm b^T;\bm W]$, and correspondingly $\tilde{\bm X}\!=\![\bm 1_n^T;\bm X]$. It can be seen from Eq.~(\ref{eq:RFS}) that simultaneously utilizing $L_{2,1}$-norms on both loss function and regularization term can not only resist outliers, but also jointly induce the row sparsity of regression matrix.

With $\bm E=\frac{1}{\gamma}(\tilde{\bm X}^T\tilde{\bm W}-\bm Y)$, $\bm A=[\tilde{\bm X}^T~\gamma \bm I]$ and $\bm U=[\bm W;\bm E]$, rewriting the above problem (\ref{eq:RFS}) as
\begin{equation}\label{eq:re:RFS}
  \min_{\bm U}\|\bm U\|_{2,1}~~~~s.t.~~~\bm A\bm U=\bm Y.
\end{equation}
The constrained problems similar to Eq.~(\ref{eq:re:RFS}) can easily derive a closed-form solution by using the Lagrange multiplier method.
\section{Methodology}\label{methodology}
In this section, we analyze the irrationality of LDA optimization criteria, and establish more discriminative optimization criteria on the premise of ensuring higher generalization performance by replacing intra-class scatter and inter-class scatter. Besides, to mitigate the interference caused by outliers or noises, we focus on minimizing the joint $L_{2,1}$-norms of all modules in our model. Then, we deduce the process of model optimization and condense an effective algorithm.
\subsection{Proposed Model}
As it involves the performance of classification and generalization, there are two criteria for evaluating the quality of supervised dimensionality reduction, one is the maintenance of the intra-class structural information, the other is the preservation of the inter-class separability. Although LDA has both of these criteria, namely minimizing intra-class scatter and maximizing inter-class scatter, there are obvious deficiencies, as analyzed below.

Reformulates the loss function of the intra-class scatter as the form of sample pair as follows
\begin{equation}\label{eq:LDA:intraclass}
\begin{split}
&\sum_{i=1}^c\sum_{j=1}^{n_i}\left\|\bm W^T\left(\bm X^i_j-\bm M_i\right)\right\|_2^2\\
=&\sum_{i=1}^c\sum_{j,k=1}^{n_i}\!S_{jk}^i\!\left\|\bm W^T\bm X^i_j-\bm W^T\bm X^i_k\right\|_2^2
\end{split}
\end{equation}
where $S_{jk}^i=\frac{1}{2n_i}$ refers to the similarity between embedded sample pairs (See appendix \ref{LDA:intra-class} for details of proof). Obviously from Eq.~(\ref{eq:LDA:intraclass}), the fully connected intra-class graph is employed in LDA, and $S^i_{jk}$ of LDA is only connected with the number of intra-class samples, which means that there is no gap in the affinity between all intra-class embedded sample pairs. This is extremely unreasonable, because samples far from the intra-class mean contribute more to the change in the objective function.

Since non-orthogonal linear transformation cannot guarantee metric invariance, the affinity between sample pairs in the ambient high-dimensional space cannot be maintained in the embedded low-dimensional space. Here, we develop an adaptive locality-aware method for calculating the affinity of samples following the learning process of projection matrix, as shown below
\begin{equation}\label{eq:obj:1}
\begin{split}
&\min_{\bm V,\bm W}\sum_{i=1}^c\sum_{j,k=1}^{n_i}\frac{V_{jk}^i}{2K}\left\|\bm W^T\bm X^i_j-\bm W^T\bm X^i_k\right\|_2\\
&s.t.~\bm V_{j,:}^i\bm 1=K,\ V_{jk}^i\in\{0,1\}
\end{split}
\end{equation}
where $V_{jk}^i$ refers to the connection indicator in the $j$-th KNN graph of class $i$. Instead of measuring the loss by the squared $L_2$ norm in (\ref{eq:LDA:intraclass}), we can reduce the interference of outliers by directly using the $L_2$ norm in the modeling of (\ref{eq:obj:1}). Although intuitively only the connections in the intra-class graph are optimized in (\ref{eq:obj:1}), the sample affinity can be induced below.
\begin{equation}\label{eq:obj:11}
\begin{split}
&\min_{\bm S,\bm W}\sum_{i=1}^c\sum_{j,k=1}^{n_i}S_{jk}^i\left\|\bm W^T\bm X^i_j-\bm W^T\bm X^i_k\right\|_2^2\\
&s.t.~\|\bm S_{j,:}^i\|_0=K
\end{split}
\end{equation}
where $\|\cdot\|_0$ denotes the number of non-zero entries in a vector, and $S_{jk}^i\!=\!\frac{1}{2K\|\bm W^T\bm X^i_j-\bm W^T\bm X^i_k\|_2}$ tends to penalize sample pairs that are far away from each other, which can be regarded as a significative definition of affinity between samples.

In terms of inter-class separability, LDA requires that the average margin between different classes should be maximally expanded, and classes far away from other classes will occupy a larger proportion. In other words, the definition of such inter-class separability makes different classes influenced by each other easily. The alternative strategy to eliminate this influence is to employ the margin representation in the form of regression. In the original regression, the strict zero-one target matrix cannot be approximated as an ideal low-dimensional embedding, thus we prefer the flexible formulation in problem (\ref{eq:ReLSR}) which helps to realize the learning of margin representation. Combining the retargeted regression term and locality-aware term, we have the following learning model
\begin{equation}\label{eq:obj:2}
\begin{split}
&\min_{\bm W,\bm b,\bm V,\bm T}\left\|\bm X^T\bm W+\bm 1\bm b^T-\bm T\right\|_{F}^2\\
&~~~~~~~~~~~~~~+\beta\sum_{i=1}^c\sum_{j,k=1}^{n_i}S_{jk}^i\left\|\bm W^T\bm X_{j}^i-\bm W^T\bm X_{k}^i\right\|_2^2\\
&s.t.~T_{il_i}-\max_{j\neq l_i}T_{ij}\geq1,\|\bm S_{j,:}^i\|_0=K
\end{split}
\end{equation}
where $\beta\!>\!0$ is a tradeoff coefficient.

In real life, the collected data often have higher dimensions and are mixed with various noises, which leads to the generation of many outliers and the appearance of redundant features. To alleviate the interference of outliers on the training process, we tend to employ $L_2$ norm instead of squared $L_2$ norm to measure the value of loss function. As indicated by problem (\ref{eq:RFS}), for matrix variables, it should be $L_{2,1}$ norm instead of Frobenius norm. Besides, the projection matrix plays the role of feature loading and feature fusion. We can achieve the intention of feature selection by forcing the row sparsity of the projection matrix, which can be achieved by performing the $L_{2,1}$ norm \cite{Nie2010,Lai2019,Wen2020}. To sum up, we establish a unified learning framework, which covers the joint learning of adaptive graph structure, projection matrix with feature selection and margin representation. The objective function is as follows
\begin{equation}\label{eq:obj}
\begin{split}
&\min_{\bm W,\bm b,\bm V,\bm T}\left\|\bm X^T\bm W+\bm 1\bm b^T-\bm T\right\|_{2,1}+\alpha\left\|\bm W\right\|_{2,1}\\
&~~~~~~~~~~~~~+\beta\sum_{i=1}^c\sum_{j,k=1}^{n_i}\frac{V_{jk}^i}{2K}\left\|\bm W^T\bm X_{j}^i-\bm W^T\bm X_{k}^i\right\|_2\\
&s.t.~T_{il_i}-\max_{j\neq l_i}T_{ij}\geq1,\bm V_{j,:}^i\bm 1=K,\ V_{jk}^i\in\{0,1\}
\end{split}
\end{equation}
where $\alpha\!>\!0$ is a regularization penalty parameter.
\subsection{Optimization Strategy}
Obviously, since the regression term, regularization term and locality-aware term in (\ref{eq:obj}) are all characterized by $L_{2,1}$ norm, problem (\ref{eq:obj}) is a non-smooth optimization problem with multivariable coupling, which urgently needs us to obtain the optimal solution through the strategy of alternating iteration. In each iteration, we transform the problem into an smooth optimization problem that is jointly convex for all variables. The details are as follows.

Fix $\bm V$ and $\bm T$, Update $\bm W$ and $\bm b$: First, we re-formulate the problem (\ref{eq:obj}) with Frobenius norm as
\begin{equation}\label{eq:re:obj}
\begin{split}
&\min_{\bm W,\bm b,\bm V,\bm T}\left\|\sqrt{\hat{\bm D}}\left(\bm X^T\bm W\!+\!\bm 1\bm b^T\!-\!\bm T\right)\right\|_F^2\!+\!\alpha\left\|\sqrt{\tilde{\bm D}}\bm W\right\|_F^2\\
&~~~~~~~~~~~+\beta\frac{1}{2}\sum_{i=1}^c\sum_{j,k=1}^{n_i}\frac{V_{jk}^i}{G_{jk}^i}\left\|\bm W^T\bm X_{j}^i\!-\!\bm W^T\bm X_{k}^i\right\|_2^2\\
\end{split}
\end{equation}
where $\hat{\bm D}\in\mathbb{R}^{n\times n}$ and $\tilde{\bm D}\in\mathbb{R}^{d\times d}$ are diagonal matrices with $ii$-th entries $1/\|(\bm X^T\bm W+\bm 1\bm b^T-\bm T)_{i,:}\|_2$ and $1/\|\bm W_{i,:}\|_2$ respectively, $G^i_{jk}=K\|\bm W^T\bm X_{j}^i-\bm W^T\bm X_{k}^i\|_2$ and specifically, $\frac{0}{0}=0$. When $\bm S$ and $\bm T$ are known, and $\hat{\bm D}$, $\tilde{\bm D}$, and $\bm G$ are assumed to be constants, Eq.~(\ref{eq:re:obj}) can be written as the matrix trace optimization problem with respect to $\bm W$ and $\bm b$ is as follows
\begin{equation}\label{eq:re:tr:obj}
\begin{split}
&\min_{\bm W,\bm b}Tr\!\left(\!\left(\!\bm X^T\!\bm W\!+\!\bm 1\bm b^T\!-\!\bm T\!\right)^T\!\hat{\bm D}\!\left(\!\bm X^T\!\bm W\!+\!\bm 1\bm b^T\!-\!\bm T\!\right)\!\right)\!\\
&\!+\!\alpha Tr\!\left(\!\bm W^T\!\tilde{\bm D}\!\bm W\!\right)\!+\!\beta Tr\!\left(\!\bm W^T\!\bm X\!(\!\bm D\!-\!\frac{\bm S\!+\!\bm S^T}{2}\!)\!\bm X^T\bm\!\bm W\!\right)\!\\
\end{split}
\end{equation}
where $\bm S\!=\!\bm V\oslash\bm G$ (Note that $\oslash$ is the element-wise division operator of matrices), $\bm D$ is a diagonal matrix with the $i$-th entry $D_{ii}=\sum_j(S_{ij}+S_{ji})/{2}$.

Taking the derivative of Eq.~(\ref{eq:re:tr:obj}) w.r.t. $\bm b$ and setting it to zero, we get
\begin{equation}\label{eq:b}
\begin{split}
  &\bm W^T\bm X\hat{\bm D}\bm 1+\bm b\bm 1^T\hat{\bm D}\bm 1-\bm T^T\hat{\bm D}\bm 1=\bm 0\\
  &\Rightarrow\bm b=\frac{\left(\bm T^T-\bm W^T\bm X\right)\hat{\bm D}\bm 1}{\bm 1^T\hat{\bm D}\bm 1}.
\end{split}
\end{equation}
Then, similarly setting the derivative of $\bm W$ to zero, we arrive at
\begin{equation}\label{eq:der:W}
\begin{split}
  &\bm X\!\hat{\bm D}\!\left(\!\bm X^T\!\bm W\!+\!\bm 1\bm b^T\!-\!\bm T\!\right)\!+\!\alpha\tilde{\bm D}\bm W\!+\!\beta\bm X\bm L\bm X^T\bm W=\bm 0\\
\end{split}
\end{equation}
where $\bm L\!=\!\bm D-\frac{\bm S+\bm S^T}{2}$. Combining Eq.~(\ref{eq:b}) and Eq.~(\ref{eq:der:W}), we obtain the optimal $\bm W$ as follows
\begin{equation}\label{eq:W}
  \bm W = \left(\bm X\bm H\bm X^T\!+\!\alpha\tilde{\bm D}\!+\!\beta\bm X\bm L\bm X^T\!\right)^{-1}\bm X\bm H\bm T
\end{equation}
where $\bm H=\hat{\bm D}-\frac{\hat{\bm D}\bm 1\bm 1^T\hat{\bm D}}{\bm 1^T\hat{\bm D}\bm 1}$.

Fix $\bm W$, $\bm b$ and $\bm T$, Update $\bm V$: Since $\bm W$, $\bm b$ and $\bm T$ are fixed, Eq.~(\ref{eq:obj}) can be reduced to
\begin{equation}\label{eq:subobj:V}
\begin{split}
&\min_{\bm V}\sum_{i=1}^c\sum_{j,k=1}^{n_i}V_{jk}^i\left\|\bm W^T\bm X^i_j-\bm W^T\bm X^i_k\right\|_2\\
&s.t.~\bm V_{j,:}^i\bm 1=K,\ V_{jk}^i\in\{0,1\}
\end{split}
\end{equation}
Eq.~(\ref{eq:subobj:V}) means that each subproblem of $i$ and $j$ is independent of each other. Then, the above problem can be further simplified to
\begin{equation}\label{eq:subobj:V2}
\begin{split}
&\min_{\bm V^i_{j,:}}\bm V^i_{j,:}{\bm G^i_{j,:}}^T~~s.t.~~\bm V_{j,:}^i\bm 1=K,\ V_{jk}^i\in\{0,1\},
\end{split}
\end{equation}
where $\bm 1$ refers to a column vector with all entries 1. Accordingly, the optimal solution of Eq.~(\ref{eq:subobj:V2}) can be directly determined by $K$ non-zero minimum values in vector $\bm G^i_{j,:}$.

Fix $\bm V$, $\bm W$ and $\bm b$, Update $\bm T$: By fixing the regression matrix $\bm W$ and offset $\bm b$, Eq.~(\ref{eq:obj}) degenerates into a retargeting problem \cite{Zhang2015}
\begin{equation}\label{eq:T}
\begin{split}
  &\min_{\bm T}\left\|\bm X^T\bm W+\bm 1\bm b^T-\bm T\right\|_{2,1}=\left\|\bm Y-\bm T\right\|_{2,1}\\
  &s.t.~T_{il_i}-\max_{j\neq l_i}T_{ij}\geq1
\end{split}
\end{equation}
where the regression result $\bm X^T\bm W+\bm 1\bm b^T$ is simply denoted as $\bm Y\in\mathbb{R}^{n\times c}$. As can be seen easily from Eq.~(\ref{eq:T}), there are $n$ mutually independent constrained convex subproblems, each of which is shown below
\begin{equation}\label{eq:sub:T}
\begin{split}
  &\min_{\bm T_{i,:}}\left\|\bm Y_{i,:}-\bm T_{i,:}\right\|_{2}=\sqrt{\sum_{j=1}^c\left(Y_{ij}-T_{ij}\right)^2}\\
  &s.t.~T_{il_i}-\max_{j\neq l_i}T_{ij}\geq1.
\end{split}
\end{equation}

As in \cite{Zhang2015}, we redefined the target vector $\bm T_{i,:}$ as follows
\begin{equation}\label{eq:re:T}
  T_{ij}=\left\{\!\begin{array}{ll}
                  Y_{ij}+\triangle_i,&j=l_i \\
                  Y_{ij}+\min{(\triangle_i-v_j,0)}, &j\neq l_i
                \end{array}
  \right.
\end{equation}
where $\triangle_i=T_{il_i}-Y_{il_i}$ is a step parameter, $v_j=Y_{ij}+1-Y_{il_i}$ is an indicator and $v_j\leq 0$ means that class $i$ and class $l_i$ satisfy the margin constraint. Using the new representation in Eq.~(\ref{eq:re:T}), we can rewrite optimization problem (\ref{eq:sub:T}) as
\begin{equation}\label{eq:new:sub:T}
  \min\Gamma(\triangle_i)=\sqrt{\triangle_i^2+\sum_{j\neq l_i}(\min{(\triangle_i-v_j,0)})^2}.
\end{equation}
Denoting $\tau=\sqrt{\triangle_i^2+\sum_{j\neq l_i}(\min{(\triangle_i-v_j,0)})^2}$ and taking the derivative of Eq.~(\ref{eq:new:sub:T}) as follows
\begin{equation}\label{eq:der:sub:T}
  \Gamma'(\triangle_i)=\frac{1}{\tau}\left(\triangle_i+\sum_{j\neq l_i}\left(\min{(\triangle_i-v_j,0)}\right)\right).
\end{equation}
Obviously, when $\triangle_i+\sum_{j\neq l_i}\left(\min{(\triangle_i-v_j,0)}\right)=0$, problem (\ref{eq:new:sub:T}) minimizes, and the optimal $\triangle_i$ is calculated as
\begin{equation}\label{eq:triangle}
  \triangle_i=\frac{\sum_{j\neq l_i}v_j\Phi(v_j)}{1+\sum_{j\neq l_i}\Phi(v_j)}
\end{equation}
where $\Phi(v_j)=\left\{\!\begin{array}{ll}
                          1,& \Gamma'(v_j)>0\\
                          0,& \text{other}
                        \end{array}
\right.$. Then, the optimal target matrix $\bm T$ can be derived from Eq.~(\ref{eq:re:T}).

Based on the above results, we develop an effective alternative iterative algorithm. The detailed steps are described in Algorithm \ref{alg:Framwork}.
\begin{algorithm}[htpb]
\caption{Our RLAR algorithm of solving problem (\ref{eq:obj})}
\label{alg:Framwork}
\begin{algorithmic}[1]
\Require Data matrix $\bm X\!\in\!\mathbb{R}^{d\!\times\!n}$, labels $\{l_i\}_{i=1}^n$, penalty parameter $\alpha$ and tradeoff coefficient $\beta$.\\
Initialize $\hat{\bm D}\!\in\!\mathbb{R}^{n\!\times\!n}$, $\tilde{\bm D}\!\in\!\mathbb{R}^{d\!\times\!d}$ as the identity matrices.\\
Initialize $G_{jk}^i\!=\!\|\bm X_{j}^i-\bm X_{k}^i\|_2$.\\
Initialize target matrix $T_{ij}\!=\!\left\{\!\begin{array}{ll}
                  1,&j=l_i \\
                  0, &j\neq l_i
                \end{array}\right.$.
\Repeat
\State Update $V^i_{jk}$ by solving the problem (\ref{eq:subobj:V}).
\State Update the affinity matrix $\bm S=\bm V\oslash\bm G$.
\State Calculate $\bm W=\left(\bm X\left(\bm H\!+\!\beta\bm L\right)\bm X^T\!+\!\alpha\tilde{\bm D}\!\right)^{-1}\!\bm X\bm H\bm T$, where $\bm H=\hat{\bm D}-\frac{\hat{\bm D}\bm 1\bm 1^T\hat{\bm D}}{\bm 1^T\hat{\bm D}\bm 1}$ and $\bm L=\!\bm D\!-\!\frac{\bm S+\bm S^T}{2}\!$.
\State Calculate $\bm b=\frac{\left(\bm T^T-\bm W^T\bm X\right)\hat{\bm D}\bm 1}{\bm 1^T\hat{\bm D}\bm 1}$.
\State Calculate $T_{ij}=\left\{\!\begin{array}{ll}
                  Y_{ij}+\triangle_i,&j=l_i \\
                  Y_{ij}+\min{(\triangle_i-v_j,0)}, &j\neq l_i
                \end{array}\right.$, where $\bm Y=\bm X^T\bm W+\bm 1\bm b^T$, $\triangle_i=\frac{\sum_{j\neq l_i}v_j\Phi(v_j)}{1+\sum_{j\neq l_i}\Phi(v_j)}$ and $v_j=Y_{ij}+1-Y_{il_i}$.
\State Update $\hat{D}_{ii}\!=\!\frac{1}{\|(\bm X^T\bm W+\bm 1\bm b^T-\bm T)_{i,:}\|_2+\varepsilon}$\footnotemark[1].
\State Update $\tilde{D}_{ii}\!=\!\frac{1}{\|\bm W_{i,:}\|_2+\varepsilon}$.
\State Update {$G_{jk}^i\overset{j\neq k}{=}\|\bm W^T\bm X_{j}^i-\bm W^T\bm X_{k}^i\|_2$}.
\Until{Convergence}
\Ensure $\bm W$, $\bm b$, $\bm V$, $\bm S$, $\bm T$.
\end{algorithmic}
\end{algorithm}
\footnotetext[1]{As indicated here, positive regularization perturbation $\varepsilon\!\rightarrow\!0$ can be employed to guarantee that the denominator is not zero in practice.}

\section{Algorithm Analysis}\label{algorithm_analysis}
\subsection{Convergence Analysis}
\begin{lemma}\label{lemma:con:ineq}
For any two sets of non-zero constants $\{a_i\}_{i=1}^n$ and $\{b_i\}_{i=1}^n$ where $a_i,b_i\in\mathbb{R}^{+}$, if the following inequality holds
\[
\sum_{i=1}^n\frac{a_i}{\sqrt{b_i}}\leq\sum_{i=1}^n\sqrt{b_i},
\]
then we have $\sum_{i=1}^n\sqrt{a_i}\leq\sum_{i=1}^n\sqrt{b_i}$.
\end{lemma}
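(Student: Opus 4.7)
The plan is to bound $\left(\sum_i \sqrt{a_i}\right)^2$ by the Cauchy–Schwarz inequality, using the hypothesis to reduce the resulting product to $\left(\sum_i \sqrt{b_i}\right)^2$, and then take square roots. The key trick is to split each $\sqrt{a_i}$ into a product of two factors whose squared sums are exactly the two sides appearing in the hypothesis.

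Concretely, I would first write
\[
\sqrt{a_i} \;=\; \frac{\sqrt{a_i}}{b_i^{1/4}}\cdot b_i^{1/4},
\]
which is legal because every $b_i>0$. Applying the Cauchy–Schwarz inequality to the vectors with components $\sqrt{a_i}/b_i^{1/4}$ and $b_i^{1/4}$ yields
\[
\left(\sum_{i=1}^n \sqrt{a_i}\right)^{\!2} \;\leq\; \left(\sum_{i=1}^n \frac{a_i}{\sqrt{b_i}}\right)\!\left(\sum_{i=1}^n \sqrt{b_i}\right).
\]
At this point the hypothesis $\sum_i a_i/\sqrt{b_i}\leq\sum_i\sqrt{b_i}$ is invoked to replace the first factor on the right, giving
\[
\left(\sum_{i=1}^n \sqrt{a_i}\right)^{\!2} \;\leq\; \left(\sum_{i=1}^n \sqrt{b_i}\right)^{\!2}.
\]
Since both sides are non-negative (the $a_i, b_i$ are positive), taking square roots preserves the inequality and produces the claimed bound $\sum_i\sqrt{a_i}\leq\sum_i\sqrt{b_i}$.

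There is essentially no obstacle here: the positivity assumption guarantees that all the roots are real and non-negative, so both the Cauchy–Schwarz step and the final square-root step are valid. The only thing to be slightly careful about is the factorization of $\sqrt{a_i}$, which must be chosen so that the Cauchy–Schwarz upper bound aligns precisely with the two sides of the assumed inequality; that is why $b_i^{1/4}$, rather than any other power, is the right splitting factor. Once that splitting is recognized, the rest is immediate.
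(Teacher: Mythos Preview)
Your proof is correct and follows essentially the same approach as the paper: both split $\sqrt{a_i}$ using the factor $b_i^{1/4}$ and apply Cauchy--Schwarz so that the resulting bound is exactly $\left(\sum_i a_i/\sqrt{b_i}\right)\left(\sum_i\sqrt{b_i}\right)$, then invoke the hypothesis. The paper additionally multiplies and divides by the constant $\left(\sum_j\sqrt{b_j}\right)^{1/2}$ in the splitting, but this is cosmetic and your version is in fact the cleaner presentation of the same argument.
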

\begin{proof}
Combined with Cauchy inequality, the following result can be obtained
\[
\begin{split}
\sum_{i=1}^n\sqrt{a_i}&=\sum_{i=1}^n\frac{\sqrt{a_i}\sqrt{\sqrt{b_i}\sum_{j=1}^n\sqrt{b_j}}}{\sqrt{\sqrt{b_i}\sum_{j=1}^n\sqrt{b_j}}}\\
&\leq\sqrt{\sum_{i=1}^n\frac{a_i}{\sqrt{b_i}\sum_{j=1}^n\sqrt{b_j}}}\sqrt{\sum_{i=1}^n\sqrt{b_i}\sum_{j=1}^n\sqrt{b_j}}.
\end{split}
\]
Since $\sqrt{\sum_{i=1}^n\frac{a_i}{\sqrt{b_i}\sum_{j=1}^n\sqrt{b_j}}}\leq1$ and $\sum_{i=1}^n\sqrt{b_i}>0$, then we have
\[
\sum_{i=1}^n\sqrt{a_i}\leq\sum_{i=1}^n\sqrt{b_i},
\]
which completes the proof.
\end{proof}

\begin{lemma}\label{lemma:ineq}
For any two sets of $z$-dimensional non-zero vectors $\{\bm p_i\}_{i=1}^n$ and $\{\bm q_i\}_{i=1}^n$ where $\bm p_i,\bm q_i\in\mathbb{R}^{z\times 1}$, if the following inequality holds
\[
\sum_{i=1}^nk_i\frac{\|\bm p_i\|^2_2}{\|\bm q_i\|_2}\leq\sum_{i=1}^nk_i\|\bm q_i\|_2, k_i>0
\]
then we have $\sum_{i=1}^nk_i\|\bm p_i\|_2\leq\sum_{i=1}^nk_i\|\bm q_i\|_2$.
\end{lemma}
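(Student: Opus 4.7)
The plan is to derive Lemma 2 as a direct consequence of Lemma 1 by choosing the right substitution, rather than redoing a Cauchy-type argument from scratch. Since Lemma 1 only speaks of scalars $a_i,b_i\in\mathbb{R}^+$ but Lemma 2 involves vector norms weighted by $k_i>0$, the trick is to absorb the vector norms and the weights into scalar quantities so that the premise and the desired conclusion of Lemma 2 both correspond to the premise and conclusion of Lemma 1.

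Concretely, I would set
\[
a_i = k_i^{2}\,\|\bm p_i\|_2^{2}, \qquad b_i = k_i^{2}\,\|\bm q_i\|_2^{2},
\]
which are positive because the $k_i$ are positive and the vectors $\bm p_i,\bm q_i$ are non-zero. Then $\sqrt{a_i}=k_i\|\bm p_i\|_2$, $\sqrt{b_i}=k_i\|\bm q_i\|_2$, and
\[
\frac{a_i}{\sqrt{b_i}}=\frac{k_i^{2}\|\bm p_i\|_2^{2}}{k_i\|\bm q_i\|_2}=k_i\frac{\|\bm p_i\|_2^{2}}{\|\bm q_i\|_2}.
\]
Summing over $i$, the hypothesis of Lemma 2 is exactly $\sum_i a_i/\sqrt{b_i}\le \sum_i\sqrt{b_i}$, which is the hypothesis of Lemma 1.

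Applying Lemma 1 to $\{a_i\}$ and $\{b_i\}$ therefore yields $\sum_i\sqrt{a_i}\le\sum_i\sqrt{b_i}$, i.e.\ $\sum_i k_i\|\bm p_i\|_2\le \sum_i k_i\|\bm q_i\|_2$, which is the claim. I expect no genuine obstacle here: the whole argument is bookkeeping once the substitution is identified, and the only minor point to mention explicitly is that the non-zero assumption on $\bm p_i,\bm q_i$ together with $k_i>0$ ensures $a_i,b_i>0$ so that Lemma 1 is legitimately applicable.
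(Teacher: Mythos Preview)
Your proposal is correct and matches the paper's own proof essentially verbatim: the paper also sets $a_i=k_i^2\|\bm p_i\|_2^2$ and $b_i=k_i^2\|\bm q_i\|_2^2$ and invokes Lemma~\ref{lemma:con:ineq}. Your write-up is in fact slightly more explicit about verifying the hypothesis and the positivity of $a_i,b_i$, which is fine.
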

\begin{proof}
By the definition of norm, we have
\[
\|\bm p_i\|_2=\sqrt{\sum_{j=1}^zp_{ij}^2}, \|\bm q_i\|_2=\sqrt{\sum_{j=1}^zq_{ij}^2}.
\]
Just set $a_i=k_i^2\sum_{j=1}^zp_{ij}^2$, $b_i=k_i^2\sum_{j=1}^zq_{ij}^2$, and then the conclusion in Lemma \ref{lemma:con:ineq} is easy to follow.
\end{proof}
\begin{theorem}\label{convergence_theorem}
Algorithm \ref{alg:Framwork} monotonically decreases the value of the objective function in Eq.~(\ref{eq:obj}) in each iteration, and ultimately converges to the local optimal solution.
\end{theorem}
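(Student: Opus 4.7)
The plan is to view Algorithm~\ref{alg:Framwork} as an iteratively reweighted scheme for the three $L_{2,1}$-structured terms of (\ref{eq:obj}) and to invoke Lemma~\ref{lemma:ineq} in aggregated form to transport the monotone decrease of the weighted Frobenius surrogate back to the original non-smooth objective. Throughout, I fix an iteration index $t$, write $\theta^{(t)}=(\bm W^{(t)},\bm b^{(t)},\bm V^{(t)},\bm T^{(t)})$, and let $\tilde J_t(\theta)$ denote the surrogate (\ref{eq:re:obj}) obtained by freezing $\hat{\bm D},\tilde{\bm D},\bm G$ at their values at $\theta^{(t)}$. Each weighted-square term of $\tilde J_t$ equals the corresponding $L_{2,1}$ term of $J$ at $\theta^{(t)}$, so $\tilde J_{t,k}(\theta^{(t)})=J_k(\theta^{(t)})$ for $k=1,2,3$.

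First, I would check that each block update of Algorithm~\ref{alg:Framwork} weakly decreases the appropriate quantity. The $\bm V$-subproblem (\ref{eq:subobj:V2}) is a selection problem whose exact minimizer gives $J_3(\bm W^{(t)},\bm V^{(t+1)})\le J_3(\bm W^{(t)},\bm V^{(t)})$; the $(\bm W,\bm b)$-subproblem (\ref{eq:re:tr:obj}) is a strictly convex unconstrained quadratic whose unique global minimizer is (\ref{eq:W})-(\ref{eq:b}), so the entire surrogate sum evaluated at $\bm V=\bm V^{(t+1)},\bm T=\bm T^{(t)}$ does not increase; and the $\bm T$-subproblem (\ref{eq:T}) splits into $n$ one-dimensional convex problems with closed-form minimizers (\ref{eq:re:T})-(\ref{eq:triangle}), giving $J_1(\bm W^{(t+1)},\bm b^{(t+1)},\bm T^{(t+1)})\le J_1(\bm W^{(t+1)},\bm b^{(t+1)},\bm T^{(t)})$.

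The key step is to upgrade the $(\bm W,\bm b)$-step's surrogate decrease into a decrease of the original $L_{2,1}$ sum, for which I would apply Lemma~\ref{lemma:ineq} once in aggregated form. Concatenate the three index sets and choose $k_i=1$ on the sample residuals, $k_i=\alpha$ on the rows of $\bm W$, and $k_{(i,j,k)}=\beta V^{(t+1),i}_{jk}/(2K)$ on the locality triples; take $\bm p$ to be the vectors evaluated at $(\bm W^{(t+1)},\bm b^{(t+1)},\bm T^{(t)})$ and $\bm q$ the same vectors at $(\bm W^{(t)},\bm b^{(t)},\bm T^{(t)})$. The $(\bm W,\bm b)$-step then produces exactly the hypothesis $\sum_i k_i\|\bm p_i\|_2^2/\|\bm q_i\|_2\le\sum_i k_i\|\bm q_i\|_2$ of Lemma~\ref{lemma:ineq}, since the right-hand side coincides with the tight value of $\tilde J_t$ at $(\bm W^{(t)},\bm b^{(t)},\bm V^{(t+1)},\bm T^{(t)})$. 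Its conclusion reads
\[
J_1(\bm W^{(t+1)},\bm b^{(t+1)},\bm T^{(t)})+\alpha J_2(\bm W^{(t+1)})+\beta J_3(\bm W^{(t+1)},\bm V^{(t+1)})\le J_1(\bm W^{(t)},\bm b^{(t)},\bm T^{(t)})+\alpha J_2(\bm W^{(t)})+\beta J_3(\bm W^{(t)},\bm V^{(t+1)}),
\]
which, combined with the $\bm V$-step inequality $J_3(\bm W^{(t)},\bm V^{(t+1)})\le J_3(\bm W^{(t)},\bm V^{(t)})$ and the subsequent $\bm T$-step, chains to $J(\theta^{(t+1)})\le J(\theta^{(t)})$.

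Monotonicity together with the obvious lower bound $J\ge 0$ then forces $\{J(\theta^{(t)})\}$ to converge, and standard block-coordinate arguments (using that every block update is the unique closed-form minimizer of its subproblem) identify any limit point of $\{\theta^{(t)}\}$ as a local optimum of (\ref{eq:obj}). The main obstacle is the bookkeeping in the third step: the three $L_{2,1}$ terms are coupled through the common variable $\bm W$, and $\bm V$ appears both as an optimization variable and inside the weights $k_{(i,j,k)}$, so Lemma~\ref{lemma:ineq} cannot be applied term by term; the aggregated weight assignment above, together with the preceding $\bm V$-step bound, is what lets a single invocation of the lemma carry the whole decrease argument.
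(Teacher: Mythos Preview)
Your proposal is correct and follows essentially the same route as the paper: recast (\ref{eq:obj}) as an iteratively reweighted problem, show each block update decreases the quadratic surrogate (\ref{eq:re:obj}), and then apply Lemma~\ref{lemma:ineq} once, in aggregated form across all three $L_{2,1}$ blocks, to pull the surrogate decrease back to the original objective. The paper does exactly this in (\ref{eq:WbT:t+1})--(\ref{eq:final:t+1}); your explicit remark that the lemma must be invoked on the concatenated family (rather than term by term) is the crux the paper handles implicitly in passing from (\ref{eq:Wb:ineq}) to (\ref{eq:Wb:ineq:2norm}). The only cosmetic difference is ordering: you process $\bm V\!\to\!(\bm W,\bm b)\!\to\!\bm T$, matching Algorithm~\ref{alg:Framwork}, whereas the paper's written proof treats $(\bm W,\bm b)$ first and $\bm V$ afterward; both chains are valid and yield the same monotone descent.
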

\begin{proof}
We refer to $F(\bm W^t,\bm b^t,\bm V^t,\bm T^t,\hat{\bm D}^t,\tilde{\bm D}^t,\bm G^t)$ as the objective function of problem (\ref{eq:re:obj}) at $t$-th iteration. Then, by solving subproblem (\ref{eq:re:tr:obj}), we arrive at
\begin{equation}\label{eq:WbT:t+1}
\begin{split}
  &F(\bm W^{t+1},\bm b^{t+1},\bm V^t,\bm T^t,\hat{\bm D}^t,\tilde{\bm D}^t,\bm G^t)\\
  &\leq F(\bm W^t,\bm b^t,\bm V^t,\bm T^t,\hat{\bm D}^t,\tilde{\bm D}^t,\bm G^t).
\end{split}
\end{equation}
Rewriting (\ref{eq:WbT:t+1}) in the form of 2-norms will yield
\begin{equation}\label{eq:Wb:ineq}
\begin{split}
&\sum_{i=1}^n\!\frac{\|(\bm X^T\bm W^{t\!+\!1}\!+\!\bm 1{\bm b^{t\!+\!1}}^T\!-\!\bm T^t)_{i,:}\|_2^2}{\|(\bm X^T\bm W^{t}\!+\!\bm 1{\bm b^{t}}^T\!-\!\bm T^t)_{i,:}\|_2}\!+\!\alpha\sum_{i=1}^d\!\frac{\|\bm W^{t\!+\!1}_{i,:}\|_2^2}{\|\bm W^{t}_{i,:}\|_2}\\
&~~+\beta\sum_{i=1}^c\sum_{j,k=1}^{n_i}\frac{{V_{jk}^i}^t}{2K}\frac{\|{\bm W^{t+1}}^T\bm X_{j}^i-{\bm W^{t+1}}^T\bm X_{k}^i\|_2^2}{\|{\bm W^{t}}^T\bm X_{j}^i-{\bm W^{t}}^T\bm X_{k}^i\|_2}\\
&\leq\sum_{i=1}^n{\|(\bm X^T\bm W^{t}\!+\!\bm 1{\bm b^{t}}^T\!-\!\bm T^t)_{i,:}\|_2}\!+\!\alpha\sum_{i=1}^d{\|\bm W^{t}_{i,:}\|_2}\\
&~~~~+\beta\sum_{i=1}^c\sum_{j,k=1}^{n_i}\frac{{V_{jk}^i}^t}{2K}{\|{\bm W^{t}}^T\bm X_{j}^i\!-\!{\bm W^{t}}^T\bm X_{k}^i\|_2}.
\end{split}
\end{equation}
Obviously, inequality (\ref{eq:Wb:ineq}) satisfies the conditions in Lemma \ref{lemma:ineq}, and then we have
\begin{equation}\label{eq:Wb:ineq:2norm}
\begin{split}
&\sum_{i=1}^n{\|(\bm X^T\bm W^{t\!+\!1}\!+\!\bm 1{\bm b^{t\!+\!1}}^T\!-\!\bm T^t)_{i,:}\|_2}\!+\!\alpha\sum_{i=1}^d{\|\bm W^{t\!+\!1}_{i,:}\|_2}\\
&~~~+\beta\sum_{i=1}^c\sum_{j,k=1}^{n_i}\frac{{V_{jk}^i}^t}{2K}{\|{\bm W^{t+1}}^T\bm X_{j}^i\!-\!{\bm W^{t+1}}^T\bm X_{k}^i\|_2}\\
&\leq\sum_{i=1}^n{\|(\bm X^T\bm W^{t}\!+\!\bm 1{\bm b^{t}}^T\!-\!\bm T^t)_{i,:}\|_2}\!+\!\alpha\sum_{i=1}^d{\|\bm W^{t}_{i,:}\|_2}\\
&~~~+\beta\sum_{i=1}^c\sum_{j,k=1}^{n_i}\frac{{V_{jk}^i}^t}{2K}{\|{\bm W^{t}}^T\bm X_{j}^i\!-\!{\bm W^{t}}^T\bm X_{k}^i\|_2}.
\end{split}
\end{equation}
From (\ref{eq:Wb:ineq:2norm}), we easily know that
\begin{equation}\label{eq:WbDG:t+1}
\begin{split}
  &F(\bm W^{t+1},\bm b^{t+1},\bm V^t,\bm T^t,\hat{\bm D}^{t+1},\tilde{\bm D}^{t+1},\bm G^{t+1})\\
  &\leq F(\bm W^t,\bm b^t,\bm V^t,\bm T^t,\hat{\bm D}^t,\tilde{\bm D}^t,\bm G^t).
\end{split}
\end{equation}
Furthermore, by solving subproblems (\ref{eq:subobj:V}) and (\ref{eq:T}), we obtain the following result
\begin{equation}\label{eq:V:t+1}
\begin{split}
  &F(\bm W^{t+1},\bm b^{t+1},\bm V^{t+1},\bm T^{t+1},\hat{\bm D}^{t+1},\tilde{\bm D}^{t+1},\bm G^{t+1})\\
  &\leq F(\bm W^{t+1},\bm b^{t+1},\bm V^{t+1},\bm T^t,\hat{\bm D}^{t+1},\tilde{\bm D}^{t+1},\bm G^{t+1})\\
  &\leq F(\bm W^{t+1},\bm b^{t+1},\bm V^t,\bm T^t,\hat{\bm D}^{t+1},\tilde{\bm D}^{t+1},\bm G^{t+1}).
\end{split}
\end{equation}
Combined with (\ref{eq:WbDG:t+1}) and (\ref{eq:V:t+1}), the final result holds
\begin{equation}\label{eq:final:t+1}
\begin{split}
  &F(\bm W^{t+1},\bm b^{t+1},\bm V^{t+1},\bm T^{t+1},\hat{\bm D}^{t+1},\tilde{\bm D}^{t+1},\bm G^{t+1})\\
  &\leq F(\bm W^{t},\bm b^{t},\bm V^t,\bm T^{t},\hat{\bm D}^{t},\tilde{\bm D}^{t},\bm G^{t}).
\end{split}
\end{equation}

It is easy to conclude that, following the update step of each variable in Algorithm \ref{alg:Framwork}, the value of the modeled objective function (\ref{eq:obj}) decreases monotonically with the increase of the number of iterations, and finally converges to a local optimal solution.
\end{proof}
\subsection{Computational Complexity Analysis}
The computational complexity of each step of the proposed algorithm RLAR is roughly estimated below. The complexity in step 2 of Algorithm \ref{alg:Framwork} is less than $O(n^2d)$. The calculation of affinity matrix $\bm S$ in step 6 is at most $O(nK)$. Since $d\geq c$, updating $\bm W$ in step 7 is, at most, of order $O(nd^2+d^3)$. Updating $\bm b$ in step 8 takes $O(ndc)$. In step 9, calculating $\bm Y$ costs $O(ndc)$, and then calculating $\bm T$ costs $O(nc)$. It costs $O(nc)$ and $O(dc)$ to calculate $\hat{\bm D}$ and $\tilde{\bm D}$, respectively. Finally, updating $\bm G$ in step 12 is less than $O(n^2c)$. In summary, assuming the algorithm performs $t$ iterations, then the total cost of our RLAR is of order $O(n^2d\!+\!(nd^2\!+\!d^3\!+\!n^2c)t)$ at most. The experimental results show that the RLAR can converge in less iterations. Thus, for large-scale data with lower dimensions, our computational complexity is acceptable.

\section{Experimental Results}\label{experiments}
In this section, we investigate the performance of the proposed RLAR in terms of classification and robustness by comparing it with some state-of-the-art approaches performed on ten publicly benchmark databases. Besides, some visualization results, parameter sensitivity analysis and convergence study are employed to further evaluate the effectiveness of the proposed method.
\begin{table}[htbp]
  \centering
  \caption{Brief description of all databases for classification}
    \begin{tabular}{cccc}
    \hline
    Databases & Instances & Features & Classes \\
    \hline
    Dermatology & 366   & 34    & 6 \\
    Diabetes & 768   & 8     & 2 \\
    Ionosphere & 351   & 34    & 2 \\
    Iris  & 150   & 4     & 3 \\
    Wine  & 178   & 13    & 3 \\
    Binalpha & 1404  & 320   & 36 \\
    YaleB & 2414  & 1024  & 38 \\
    AR    & 1400  & 1024  & 100 \\
    COIL20 & 1440  & 1024  & 20 \\
    Caltech101 & 9144  & 3000  & 102 \\
    \hline
    \end{tabular}%
  \label{tab:data:description}%
\end{table}%
\subsection{Experimental Settings}
The databases involved in our experimental comparisons are from a variety of scenarios to highlight the adaptability of our proposed RLAR, including five UCI data sets and five relatively large-scale databases. Brief information about these databases is described in Table \ref{tab:data:description}. All features of these data are normalized prior to the experiments. And, we repeat each experiment for 10 trials independently with different random splits of training and test data, and then record the mean accuracy and the standard deviation.

All the participating FE algorithms consist of some representative discriminant algorithms, excellent regression algorithms and manifold-inspired algorithms, specifically including RR \cite{Hoerl1970}, LDA \cite{Fisher1936}, MMC \cite{Li2006}, LSDA \cite{Cai2007}, LFDA \cite{Sugiyama2007}, NMMP \cite{Nie2007}, SDA \cite{Clemmensen2011}, SULDA \cite{Zhang2016}, L21SDA \cite{Shi2014}, ALDE \cite{Liu2015}, ReLSR \cite{Zhang2015}, MPDA \cite{Zhou2017}, RSLDA \cite{Wen2019} and ALPR \cite{Wen2020}. And, we perform cross validation to search the best parameters for each algorithm or directly accept the suggested default parameter settings. To be fair, the resulting feature dimension achieved by all algorithms is uniformly set to $c$, except for some LDA-based algorithms that only reduce dimension to $c-1$ at most. For the compact representations produced by running these algorithms, we simply utilize the 1-NN classifier to evaluate the classification performance.

For the number of neighbors used in locality-aware structure learning, we simply set it to 7 for the data split with a training sample size greater than 10 per class and 3 for other cases. Besides, our approach determines two hyper-parameters $\alpha$ and $\beta$ by searching from the grid coordinate set $\{0.001,0.005,0.01,0.05,0.1,0.5,1 ,10,100,1000\}$. And we cover the searching process in details later in the subsection of parameter sensitivity analysis. The termination condition of our algorithm, including all algorithms involved in iterative optimization, is uniformly set to $30$ iterations.

\subsection{Classification Performance Evaluation}
\subsubsection{UCI Classification}
In the experiments conducted here, we employ five small-scale data sets taken from the UCI Machine Learning Repository \footnotemark[2], namely 'Dermatology', 'Diabetes', 'Ionosphere', 'Iris' and 'Wine'.\footnotetext[2]{https://archive.ics.uci.edu/ml/datasets.html} These data sets belong to different domains, which helps verify the universality of our method. We randomly assign 20\% of the samples from these databases to the training set and 80\% to the test set, and repeat the process 10 times. The experimental results are listed in Table \ref{tab:UCI}. And the best results are marked in bold.

Although some comparison methods have achieved superior performance on some data sets, they are not superior in all cases. On these data sets, working well in all of comparisons indicates that our model has strong universality and high efficiency. Meanwhile, achieving a relatively small standard deviation in most cases also suggests that our method is somewhat stable. To some extent, the comparisons with other discriminant methods proves that our method has stronger ability to extract discriminant information.
\begin{table*}[htpb]
  \centering
  \caption{Mean classification accuracy (ACC \%) and standard deviation (Std \%) of various approaches on five UCI data sets}
    \begin{tabular}{|c|c|c|c|c|c|c|c|c|c|c|}
    \hline
    \multirow{2}[4]{*}{Alg.} & \multicolumn{2}{c|}{Dermatology} & \multicolumn{2}{c|}{Diabetes} & \multicolumn{2}{c|}{Ionosphere} & \multicolumn{2}{c|}{Iris} & \multicolumn{2}{c|}{Wine} \\
    \cline{2-11}& ACC  & Std   & ACC  & Std   & ACC  & Std   & ACC  & Std   & ACC  & Std \\
    \hline\hline
    RR    & 95.51  & 1.14  & 55.83  & 1.86  & 86.19  & 2.33  & 96.00  & 1.56  & 64.48  & 5.42  \\
    LDA   & 94.39  & 0.97  & 55.42  & 1.43  & 83.88  & 2.48  & 95.58  & 1.31  & 88.41  & 3.28  \\
    MMC   & 95.78  & 0.87  & 52.26  & 3.05  & 84.63  & 4.10  & 94.08  & 4.79  & 78.21  & 6.51  \\
    LSDA  & 94.39  & 1.08  & 55.83  & 1.51  & 83.70  & 3.80  & 95.17  & 1.66  & 87.93  & 3.38  \\
    LFDA  & 95.14  & 1.09  & 55.60  & 2.98  & 86.58  & 2.15  & 96.50  & 1.35  & 63.59  & 5.28  \\
    NMMP  & 88.61  & 2.88  & 52.84  & 2.25  & 83.70  & 3.49  & 94.42  & 4.29  & 81.03  & 6.47  \\
    SDA   & 93.64  & 1.34  & 51.19  & 2.34  & 74.41  & 7.10  & 95.67  & 1.70  & 63.31  & 5.92  \\
    SULDA & 93.44  & 1.52  & 56.36  & 2.15  & 84.09  & 2.27  & 95.08  & 1.27  & 88.00  & 7.00  \\
    L21SDA & 95.14  & 1.62  & 55.78  & 2.27  & 85.30  & 3.04  & 91.17  & 3.54  & 61.86  & 3.56  \\
    ALDE  & 95.51  & 0.88  & 52.61  & 1.79  & 76.65  & 3.74  & 94.42  & 4.30  & 71.72  & 8.82  \\
    ReLSR & 94.01  & 1.71  & 55.52  & 1.54  & 86.58  & 3.59  & 95.25  & 1.62  & 70.07  & 6.64  \\
    MPDA  & 94.12  & 1.57  & 55.29  & 3.10  & 83.67  & 2.64  & 93.67  & 4.47  & 84.69  & 2.60  \\
    RSLDA & 90.61  & 2.55  & 51.90  & 1.91  & 82.63  & 4.59  & 96.00  & 1.10  & 63.38  & 5.40  \\
    ALPR  & 94.49  & 1.67  & 55.49  & 2.03  & 83.02  & 3.15  & 94.83  & 2.25  & 65.72  & 5.76  \\
    \hline
    RLAR  & \textbf{95.99 } & 0.92  & \textbf{57.00 } & 2.81  & \textbf{86.76 } & 4.42  & \textbf{96.58 } & 1.54  & \textbf{90.34 } & 2.82  \\
    \hline
    \end{tabular}%
  \label{tab:UCI}%
\end{table*}%
\subsubsection{Handwriting Recognition}
Handwriting recognition is one of the classical tasks in pattern recognition and computer vision. To evaluate the performance of the proposed RLAR for this task, we perform a series of comparative experiments on the Binary Alphadigits database \footnotemark[3] to demonstrate the effectiveness of our method.\footnotetext[3]{https://cs.nyu.edu/~roweis/data.html} The database consists of 1404 samples belonging to 36 classes, each of which is a binary image of $20\times16$ pixels. Besides, the database contains not only digits of '0' through '9', but also capital letters of 'A' through 'Z', thus posing a challenge to classification.

For convenience, we simply denote the database as 'Binalpha'. Then, we randomly select $u$ ($u\!=\!10, 13, 16, 19$) images of each subject to form the training set, and the remaining samples to form the test set. The mean recognition results on the database are shown in Table \ref{tab:binalpha}, where '\# number' stands for the number of training samples in each class and is also used in the later recording of experimental results. It can be observed that the recognition rate of each method increases with the expansion of the training set. And, we found that LDA, LSDA, NMMP, SULDA, and MPDA that performed well in the UCI data sets fail on this database. Moreover, our method is superior to many other methods in recognition efficiency, which also indicates that our method has a strong ability of discrimination.
\begin{table}[htbp]
  \centering
  \caption{Classification performance (mean$\pm$std \%) of various approaches on the Binalpha database}
    \begin{tabular}{|c|c|c|c|c|}
    \hline
    Alg. & \# 10 & \# 13 & \# 16 & \# 19 \\
    \hline
    \hline
    RR    & 49.45$\pm$1.10  & 50.68$\pm$0.88  & 52.17$\pm$1.45  & 53.53$\pm$1.14  \\
    LDA   & 11.47$\pm$0.86  & 26.20$\pm$1.71  & 35.40$\pm$1.89  & 41.69$\pm$1.64  \\
    MMC   & 63.65$\pm$1.23  & 64.54$\pm$0.92  & 65.01$\pm$1.46  & 65.89$\pm$1.31  \\
    LSDA  & 11.59$\pm$0.97  & 26.15$\pm$1.81  & 35.40$\pm$1.88  & 41.65$\pm$1.61  \\
    LFDA  & 64.20$\pm$1.05  & 66.22$\pm$0.90  & 67.57$\pm$1.19  & 68.35$\pm$1.28  \\
    NMMP  & 23.14$\pm$2.56  & 31.18$\pm$2.65  & 35.89$\pm$2.54  & 37.01$\pm$2.10  \\
    SDA   & 51.70$\pm$1.34  & 56.05$\pm$0.88  & 59.32$\pm$1.82  & 62.61$\pm$1.94  \\
    SULDA & 10.99$\pm$0.73  & 19.74$\pm$1.34  & 26.30$\pm$1.75  & 30.15$\pm$1.30  \\
    L21SDA & 43.79$\pm$1.39  & 44.51$\pm$1.39  & 46.50$\pm$1.39  & 49.10$\pm$1.67  \\
    ALDE  & 64.64$\pm$1.13  & 66.55$\pm$1.01  & 67.24$\pm$1.31  & 68.39$\pm$1.57  \\
    ReLSR & 53.66$\pm$1.11  & 54.64$\pm$1.15  & 55.69$\pm$1.02  & 56.72$\pm$1.12  \\
    MPDA  & 8.40$\pm$3.15  & 28.29$\pm$2.57  & 40.64$\pm$1.69  & 47.29$\pm$1.42  \\
    RSLDA & 16.21$\pm$1.60  & 27.05$\pm$2.17  & 35.89$\pm$1.41  & 42.51$\pm$1.84  \\
    ALPR  & 51.80$\pm$1.70  & 52.04$\pm$1.42  & 53.45$\pm$1.60  & 55.74$\pm$0.97  \\
    \hline
    RLAR  & \textbf{64.80$\pm$1.31}  & \textbf{66.92$\pm$0.85}  & \textbf{68.33$\pm$1.22}  & \textbf{69.40$\pm$0.87}  \\
    \hline
    \end{tabular}%
  \label{tab:binalpha}%
\end{table}%
\subsubsection{Face Recognition}\label{face:recognition}
In this recognition scenario, we employ two real commonly-used face databases to evaluate the performance of all algorithms, namely the extended YaleB database \cite{Georghiades2001} and the AR database \cite{Martinez1998}. The YaleB database contains 2414 samples from 38 subjects, while the AR database contains more than 4000 color face images of 126 individuals. These two databases are collected under the condition of illumination and expression changes, while the AR database also contains some occlusions. These changes are challenging the performance of our RLAR.

For the AR database, we extract a subset of 1400 images without any occlusion, including 50 female and 50 male subjects, for the experiments. Before implementing all the algorithms, the face images in both of databases are resized to $32\times32$ pixels. Each experiment is independently repeated for 10 times, and the average experimental results of various methods on the two databases are listed in Tables \ref{tab:YaleB} and \ref{tab:AR} respectively, in which the number of training samples in each class is included. And the best results in each set of comparisons are shown in bold. Although our method is slightly inferior to ALPR in Table \ref{tab:YaleB}, it is superior to the others. Moreover, in Table \ref{tab:AR} our approach trumps all others. These are sufficient to confirm that the discriminant model we have established is efficient enough to yield desirable face recognition results.
\begin{table}[htbp]
  \centering
  \caption{Classification performance (mean$\pm$std \%) of various approaches on the YaleB database}
    \begin{tabular}{|c|c|c|c|c|}
    \hline
    Alg. & \# 15 & \# 20 & \# 25 & \# 30 \\
    \hline
    \hline
    RR    & 93.63$\pm$0.99  & 95.87$\pm$0.66  & 97.22$\pm$0.58  & 98.21$\pm$0.60  \\
    LDA   & 90.68$\pm$0.83  & 89.99$\pm$0.94  & 81.91$\pm$0.53  & 86.51$\pm$0.97  \\
    MMC   & 92.44$\pm$1.23  & 94.90$\pm$0.74  & 96.15$\pm$0.74  & 97.31$\pm$0.59  \\
    LSDA  & 90.74$\pm$0.74  & 89.93$\pm$1.04  & 82.38$\pm$0.75  & 86.54$\pm$0.97  \\
    LFDA  & 88.45$\pm$1.18  & 90.56$\pm$0.94  & 91.81$\pm$0.75  & 93.08$\pm$1.00  \\
    NMMP  & 92.39$\pm$0.67  & 92.26$\pm$0.89  & 87.15$\pm$0.58  & 88.04$\pm$0.97  \\
    SDA   & 93.84$\pm$1.04  & 95.66$\pm$0.80  & 96.66$\pm$0.61  & 97.76$\pm$0.30  \\
    SULDA & 89.50$\pm$0.81  & 89.50$\pm$1.04  & 81.71$\pm$0.72  & 85.33$\pm$0.84  \\
    L21SDA & 94.67$\pm$0.81  & 96.84$\pm$0.59  & 97.72$\pm$0.45  & 98.75$\pm$0.26  \\
    ALDE  & 78.80$\pm$1.73  & 83.88$\pm$1.41  & 87.04$\pm$1.00  & 89.40$\pm$0.82  \\
    ReLSR & 94.16$\pm$0.86  & 96.41$\pm$0.69  & 97.55$\pm$0.65  & 98.48$\pm$0.40  \\
    MPDA  & 92.41$\pm$0.67  & 92.27$\pm$0.87  & 87.53$\pm$0.56  & 85.93$\pm$1.06  \\
    RSLDA & 86.20$\pm$1.27  & 89.67$\pm$0.68  & 91.81$\pm$0.74  & 93.41$\pm$0.67  \\
    ALPR  & \textbf{95.70$\pm$0.93}  & \textbf{97.52$\pm$0.52}  & \textbf{98.47$\pm$0.39}  & \textbf{99.16$\pm$0.32}  \\
    \hline
    RLAR  & 95.07$\pm$0.80  & 97.04$\pm$0.72  & 97.84$\pm$0.48  & 98.65$\pm$0.33  \\
    \hline
    \end{tabular}%
  \label{tab:YaleB}%
\end{table}%
\begin{table}[htbp]
  \centering
  \caption{Classification performance (mean$\pm$std \%) of various approaches on the AR database}
    \begin{tabular}{|c|c|c|c|c|}
    \hline
    Alg. & \# 5 & \# 6 & \# 7 & \# 8 \\
    \hline
    \hline
    RR    & 95.07$\pm$0.73  & 96.64$\pm$0.42  & 97.74$\pm$0.67  & 98.18$\pm$0.75  \\
    LDA   & 90.83$\pm$1.12  & 90.98$\pm$0.41  & 90.63$\pm$0.82  & 88.83$\pm$1.54  \\
    MMC   & 91.71$\pm$0.74  & 94.15$\pm$0.75  & 95.57$\pm$0.95  & 96.55$\pm$0.72  \\
    LSDA  & 91.13$\pm$0.92  & 91.21$\pm$0.67  & 90.34$\pm$0.65  & 88.88$\pm$1.40  \\
    LFDA  & 81.71$\pm$1.03  & 85.20$\pm$0.63  & 88.17$\pm$0.83  & 90.47$\pm$1.04  \\
    NMMP  & 94.56$\pm$0.95  & 94.84$\pm$0.39  & 94.94$\pm$0.84  & 94.50$\pm$0.74  \\
    SDA   & 95.44$\pm$0.71  & 97.00$\pm$0.57  & 97.97$\pm$0.59  & 98.12$\pm$0.71  \\
    SULDA & 90.34$\pm$1.05  & 90.34$\pm$0.59  & 90.19$\pm$0.84  & 88.10$\pm$1.48  \\
    L21SDA & 95.11$\pm$0.74  & 97.09$\pm$0.53  & 97.90$\pm$0.50  & 98.17$\pm$0.62  \\
    ALDE  & 92.60$\pm$0.59  & 94.69$\pm$0.64  & 95.76$\pm$0.91  & 96.68$\pm$0.80  \\
    ReLSR & 94.54$\pm$0.66  & 96.35$\pm$0.47  & 97.41$\pm$0.65  & 97.97$\pm$0.75  \\
    MPDA  & 93.84$\pm$0.90  & 94.89$\pm$0.76  & 95.81$\pm$0.86  & 96.03$\pm$1.00  \\
    RSLDA & 87.16$\pm$1.93  & 90.68$\pm$1.48  & 93.01$\pm$1.15  & 94.73$\pm$0.75  \\
    ALPR  & 96.07$\pm$0.70  & 97.78$\pm$0.55  & 98.47$\pm$0.71  & 98.88$\pm$0.48  \\
    \hline
    RLAR  & \textbf{97.33$\pm$0.67}  & \textbf{98.25$\pm$0.30}  & \textbf{98.71$\pm$0.47}  & \textbf{98.95$\pm$0.46}  \\
    \hline
    \end{tabular}%
  \label{tab:AR}%
\end{table}%

Alternatively in Fig. \ref{fig:Visualization}, we illustrate the optimal visualization results for the case of 8 training samples per class on the AR database, including the retargeted matrix $\bm T$, the connection matrix $\bm V$, and the induced affinity matrix $\bm S$. It also shows intuitively that our method has obvious effect on revealing the intra-class local graph structure and learning the target representation with distinct margins.
\begin{figure}[htpb]
  \centering
  \subfigure[]{\label{fig:Visualization:a}
  \centering
  \includegraphics[scale=0.25]{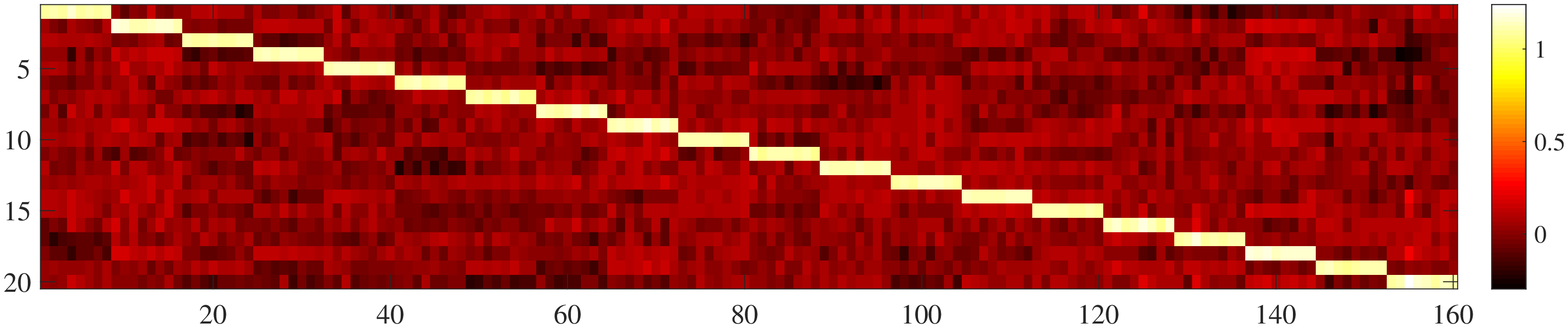}}
  \subfigure[]{\label{fig:Visualization:b}
  \centering
  \includegraphics[scale=0.21]{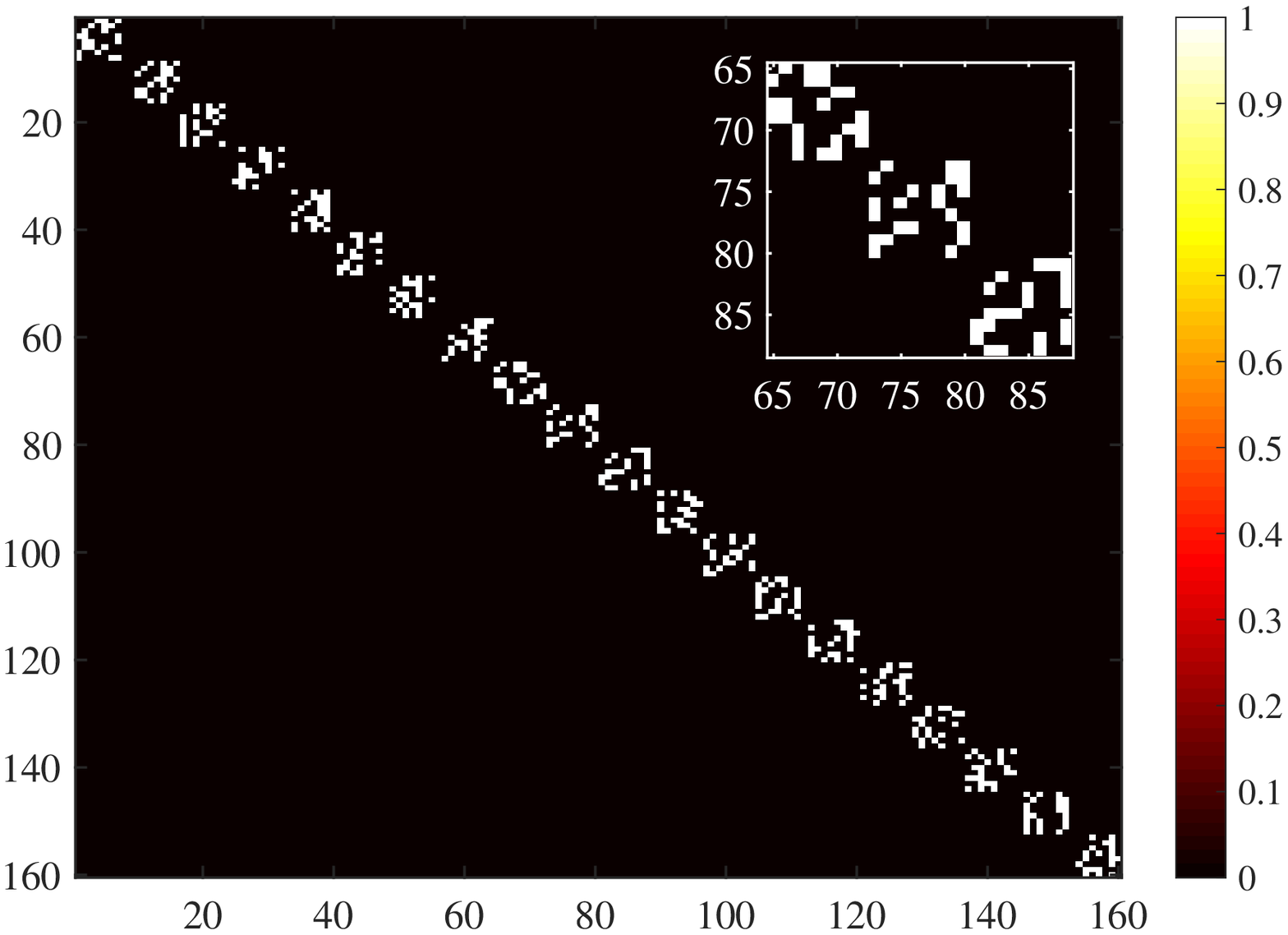}}
  \subfigure[]{\label{fig:Visualization:c}
  \centering
  \includegraphics[scale=0.21]{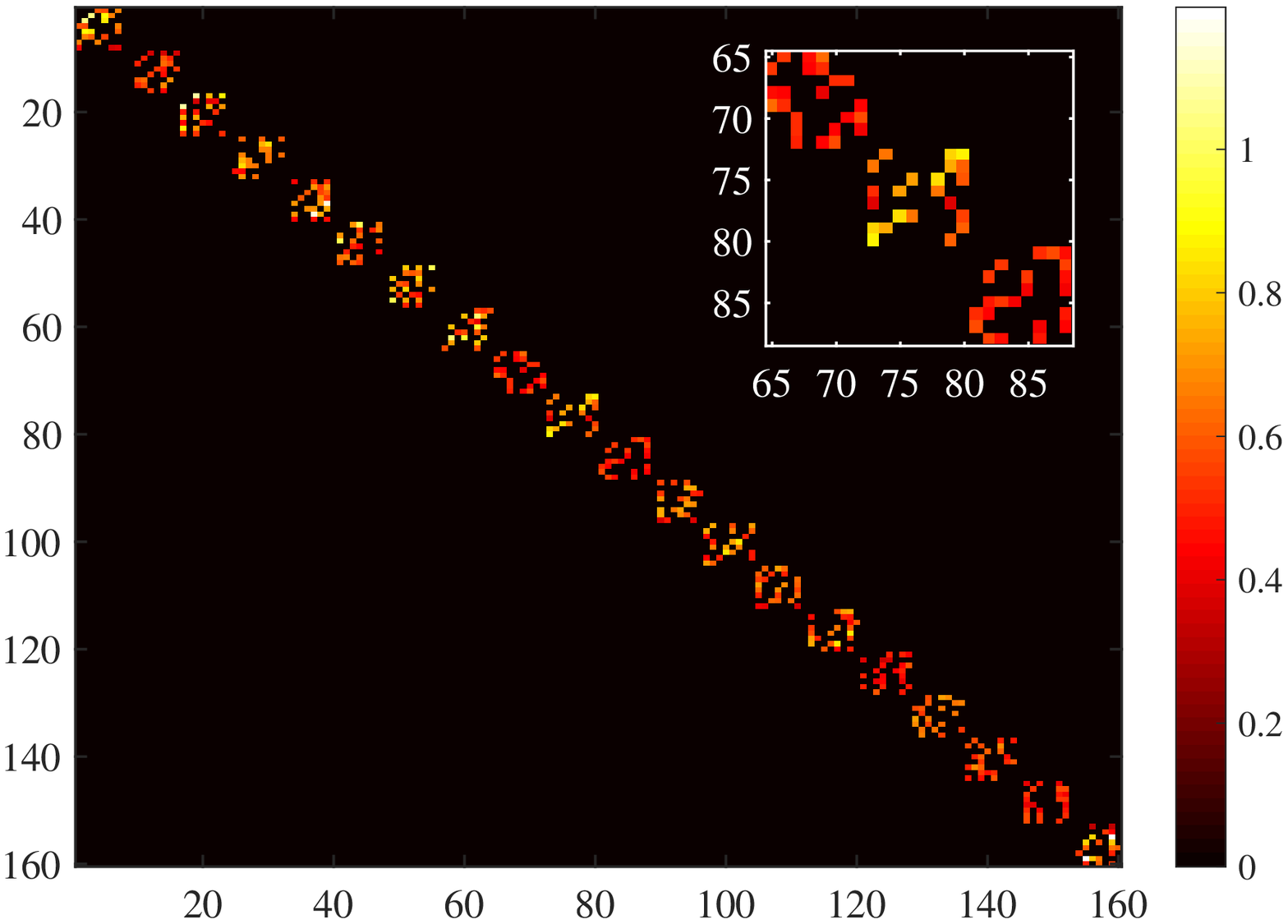}}
  \caption{Visualization results for the top twenty classes on the AR database. \subref{fig:Visualization:a} Retargeted matrix $\bm T$. \subref{fig:Visualization:b} Connection matrix $\bm V$. \subref{fig:Visualization:c} Induced affinity matrix $\bm S$.}\label{fig:Visualization}
\end{figure}
\subsubsection{Object Recognition}
To demonstrate the effectiveness of our method in dealing with the problem of object recognition, we conduct a series of comparison experiments on the COIL20 \footnotemark[4] and Caltech101 \footnotemark[5] databases.\footnotetext[4]{http://www.cs.columbia.edu/CAVE/software/softlib/coil-20.php}\footnotetext[5]{http://www.vision.caltech.edu/Image\_Datasets/Caltech101} The COIL20 database contains 20 objects and a total of 1440 images. As the objects rotate on the turntable, images of each object are taken at 5-degree intervals, with 72 images per object. The Caltech101 database has images of 102 classes of objects containing a background class, each of which has about 40 to 800 images, and most classes have about 50 images.

For the COIL20 database we just use these gray-scale images that are resized to $32\times32$ pixels. For the samples in Caltech101 database, we employ spatial pyramid features with dimension 3000 for recognition in view of the differences in background, size and scale. Besides, $u$ ($u\!=\!15,20,25,30$) and $v$ ($v\!=\!10,15,20,25$) samples are selected from each class of the two databases as the training sets, and the remaining samples are used as the test sets. The average experimental results achieved by various methods are shown in Tables \ref{tab:COIL20} and \ref{tab:Caltech101}. As can be seen in Tables \ref{tab:COIL20} and \ref{tab:Caltech101}, our approach achieves competitive performance compared to other approaches. In particular, with the exception of MMC, our method performs significantly better on the Caltech101 database than any other method.
\begin{table}[htbp]
  \centering
  \caption{Classification performance (mean$\pm$std \%) of various approaches on the COIL20 database}
    \begin{tabular}{|c|c|c|c|c|}
    \hline
    Alg. & \# 15 & \# 20 & \# 25 & \# 30 \\
    \hline
    \hline
    RR    & 94.50$\pm$1.15  & 96.68$\pm$0.76  & 97.59$\pm$0.72  & 98.20$\pm$0.56  \\
    LDA   & 87.28$\pm$0.86  & 89.28$\pm$0.92  & 89.88$\pm$0.79  & 90.49$\pm$1.03  \\
    MMC   & 96.73$\pm$1.01  & 98.11$\pm$0.50  & 98.78$\pm$0.45  & 99.25$\pm$0.42  \\
    LSDA  & 87.89$\pm$0.96  & 89.53$\pm$1.04  & 90.36$\pm$0.95  & 90.62$\pm$0.75  \\
    LFDA  & 96.39$\pm$0.76  & 97.62$\pm$0.28  & 98.93$\pm$0.42  & 99.35$\pm$0.45  \\
    NMMP  & 92.57$\pm$1.37  & 93.91$\pm$1.06  & 94.52$\pm$0.49  & 94.90$\pm$0.61  \\
    SDA   & 92.24$\pm$1.34  & 95.16$\pm$1.08  & 96.21$\pm$0.78  & 97.60$\pm$0.63  \\
    SULDA & 85.61$\pm$1.10  & 87.66$\pm$1.20  & 88.60$\pm$1.06  & 88.98$\pm$0.96  \\
    L21SDA & 94.27$\pm$0.84  & 96.13$\pm$0.94  & 96.86$\pm$0.68  & 97.74$\pm$0.71  \\
    ALDE  & 96.61$\pm$0.99  & 98.11$\pm$0.67  & 98.70$\pm$0.37  & 99.21$\pm$0.49  \\
    ReLSR & 94.87$\pm$0.72  & 96.77$\pm$0.69  & 97.60$\pm$0.78  & 98.18$\pm$0.50  \\
    MPDA  & 92.57$\pm$1.37  & 93.91$\pm$1.06  & 94.52$\pm$0.49  & 94.90$\pm$0.61  \\
    RSLDA & 91.77$\pm$1.51  & 93.73$\pm$1.20  & 94.33$\pm$1.05  & 95.08$\pm$0.81  \\
    ALPR  & 95.32$\pm$0.69  & 97.03$\pm$0.71  & 97.93$\pm$0.58  & 98.35$\pm$0.58  \\
    \hline
    RLAR  & \textbf{96.92$\pm$0.87}  & \textbf{98.58$\pm$0.30}  & \textbf{99.10$\pm$0.41}  & \textbf{99.38$\pm$0.33}  \\
    \hline
    \end{tabular}%
  \label{tab:COIL20}%
\end{table}%
\begin{table}[htbp]
  \centering
  \caption{Classification performance (mean$\pm$std \%) of various approaches on the Caltech101 database}
    \begin{tabular}{|c|c|c|c|c|}
    \hline
    Alg. & \# 10 & \# 15 & \# 20 & \# 25 \\
    \hline
    \hline
    RR    & 58.23$\pm$0.84  & 60.50$\pm$0.49  & 61.25$\pm$0.53  & 62.03$\pm$0.49  \\
    LDA   & 56.64$\pm$0.94  & 55.73$\pm$0.52  & 50.26$\pm$0.73  & 34.72$\pm$0.84  \\
    MMC   & 61.95$\pm$0.88  & 66.45$\pm$0.62  & 69.39$\pm$0.57  & 71.72$\pm$0.42  \\
    LSDA  & 56.56$\pm$0.96  & 55.80$\pm$0.41  & 50.33$\pm$0.64  & 34.69$\pm$0.82  \\
    LFDA  & 55.67$\pm$1.25  & 61.05$\pm$0.74  & 64.82$\pm$0.69  & 67.66$\pm$0.87  \\
    NMMP  & 54.25$\pm$0.96  & 54.46$\pm$0.65  & 51.52$\pm$0.80  & 42.92$\pm$0.78  \\
    SDA   & 56.76$\pm$0.75  & 61.50$\pm$0.83  & 64.48$\pm$0.72  & 66.93$\pm$0.37  \\
    SULDA & 46.75$\pm$0.84  & 48.17$\pm$0.84  & 44.73$\pm$0.70  & 30.96$\pm$0.61  \\
    L21SDA & 54.41$\pm$1.20  & 56.54$\pm$0.88  & 54.96$\pm$0.70  & 52.66$\pm$0.41  \\
    ALDE  & 58.44$\pm$0.79  & 62.57$\pm$0.68  & 65.28$\pm$0.50  & 67.53$\pm$0.49  \\
    ReLSR & 61.68$\pm$1.03  & 65.25$\pm$0.58  & 67.32$\pm$0.61  & 69.01$\pm$0.51  \\
    MPDA  & 54.23$\pm$1.09  & 54.41$\pm$0.53  & 51.52$\pm$0.79  & 42.97$\pm$0.81  \\
    RSLDA & 51.82$\pm$1.20  & 51.11$\pm$0.77  & 46.11$\pm$0.58  & 36.22$\pm$0.79  \\
    ALPR  & 61.50$\pm$1.07  & 65.11$\pm$0.46  & 66.45$\pm$0.66  & 67.09$\pm$0.44  \\
    \hline
    RLAR  & \textbf{62.81$\pm$1.03}  & \textbf{67.64$\pm$0.68}  & \textbf{70.52$\pm$0.58}  & \textbf{72.61$\pm$0.41}  \\
    \hline
    \end{tabular}%
  \label{tab:Caltech101}%
\end{table}%
\subsection{Robustness Evaluation}
To investigate the sensitivity of our RLAR to outliers, we conduct two groups of comparative experiments involving the intensity and quantity of outliers on the AR database. The subset of AR database containing 1400 face images mentioned in Section \ref{face:recognition} is denoted as 'Subset1' here. And, we extract another subset with natural occlusion from the AR database, named as 'Subset2', including 600 images blocked by glasses and 600 images blocked by scarves from 50 male and 50 female subjects. Besides, we artificially block facial images in 'Subset1' by a 'baboon' image with varying block sizes. Some sample images of natural occlusion and artificial occlusion are illustrated in Fig. \ref{fig:AR:Samples} and Fig. \ref{fig:AR:CorruptedSamples}, respectively. The detailed experimental schemes and results are as follows.
\begin{figure}[htpb]
  \centering
  \includegraphics[scale=0.5]{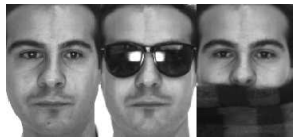}\\
  \caption{One image without blocking and two images with glasses and scarf blocking selected from the AR database.}\label{fig:AR:Samples}
\end{figure}
\begin{figure}[htpb]
  \centering
  \includegraphics[scale=0.5]{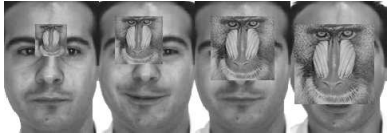}\\
  \caption{Sample images corrupted by a 'baboon' image with varying block sizes.}\label{fig:AR:CorruptedSamples}
\end{figure}

We first observe the variation of classification performance with the intensity of outliers. We randomly select 8 samples from each class of 'Subset1' to form the training set, among which 3 images are corrupted by a randomly located square block of a 'baboon' image, and the remaining samples were used as the test set. The block size determines the occlusion level of an image. Then, we evaluate the classification performance of various methods at four occlusion levels, and then list the average experimental results for 10 trials
independently in Table \ref{tab:artificial:occlusion}. For the sensitivity of quantitative outliers, we randomly select 1, 2, and 3 samples from the above 8 training samples, and then replace them with the glasses and scarf blocking images selected randomly in 'Subset2'. Similarly, we independently perform 10 trials for each evaluation, and present the average experimental results in Table \ref{tab:original:occlusion}.

In Tables \ref{tab:artificial:occlusion} and \ref{tab:original:occlusion}, as the intensity and quantity of outliers increase, we observe that the classification accuracies achieved by various methods gradually decreases. It is worth noting that SULDA, who performs well in the above experiments, fails completely in the face of outlier interference. And in Table \ref{tab:artificial:occlusion}, NMMP also fails at the occlusion level of $30\times30$. It can be concluded from Table \ref{tab:artificial:occlusion} that our method is superior to all methods in terms of recognition accuracy at different occlusion levels. In Table \ref{tab:original:occlusion}, the proposed method is slightly inferior to ALPR except when the number of scarf blocking images is 2, while it outperforms all of the compared methods in other cases. In general, with the increase of the intensity and quantity of outliers, the recognition accuracies of our RLAR does not decrease significantly, which also demonstrates that our method has excellent performance in resisting outliers.
\begin{table}[htbp]
  \centering
  \caption{Classification performance (\%) on the AR database with varying block sizes.}
    \begin{tabular}{|c|c|c|c|c|}
    \hline
    \multirow{2}[4]{*}{Alg.} & \multicolumn{4}{c|}{Occlusion level} \\
\cline{2-5}     & 15$\times$15    & 20$\times$20    & 25$\times$25   & 30$\times$30\\
    \hline
    \hline
    RR    & 96.60  & 96.08  & 95.83  & 94.50  \\
    LDA   & 84.82  & 83.72  & 78.13  & 56.32  \\
    MMC   & 93.73  & 92.60  & 91.63  & 91.08  \\
    LSDA  & 85.22  & 83.88  & 78.27  & 56.23  \\
    LFDA  & 84.22  & 81.98  & 79.88  & 77.80  \\
    NMMP  & 92.27  & 91.55  & 88.65  & 2.40  \\
    SDA   & 96.57  & 95.82  & 95.72  & 93.90  \\
    SULDA & 30.73  & 31.53  & 27.87  & 8.22  \\
    L21SDA & 97.25  & 96.53  & 96.50  & 94.22  \\
    ALDE  & 93.73  & 92.18  & 90.63  & 88.60  \\
    ReLSR & 96.55  & 95.72  & 95.40  & 93.53  \\
    MPDA  & 92.85  & 91.98  & 91.80  & 91.52  \\
    RSLDA & 88.98  & 87.12  & 82.50  & 66.53  \\
    ALPR  & 98.25  & 97.80  & 97.43  & 94.90  \\
    RLAR  & \textbf{98.43 } & \textbf{98.02 } & \textbf{97.92 } & \textbf{95.23 } \\
    \hline
    \end{tabular}%
  \label{tab:artificial:occlusion}%
\end{table}%
\begin{table}[htbp]
  \centering
  \caption{Classification performance (\%) on the AR database with diverse number of glasses and scarves blocking images.}
    \begin{tabular}{|c|c|c|c|c|c|c|}
    \hline
    \multirow{2}[4]{*}{Alg.} & \multicolumn{3}{c|}{Glasses occlusion} & \multicolumn{3}{c|}{Scarf occlusion} \\
\cline{2-7}          & \# 1     & \# 2     & \# 3     & \# 1     & \# 2     & \# 3 \\
    \hline
    \hline
    RR    & 97.60  & 97.13  & 96.45  & 97.57  & 97.05  & 96.42  \\
    LDA   & 87.13  & 86.48  & 85.28  & 87.47  & 86.20  & 84.13  \\
    MMC   & 95.35  & 94.47  & 93.20  & 95.98  & 95.63  & 94.98  \\
    LSDA  & 87.02  & 86.65  & 84.90  & 87.87  & 86.32  & 84.50  \\
    LFDA  & 87.37  & 85.08  & 82.05  & 88.57  & 87.43  & 85.42  \\
    NMMP  & 93.57  & 92.70  & 91.92  & 93.30  & 92.57  & 91.47  \\
    SDA   & 97.70  & 97.15  & 96.40  & 97.80  & 97.17  & 96.73  \\
    SULDA & 41.50  & 37.72  & 29.48  & 40.72  & 36.98  & 28.40  \\
    L21SDA & 97.90  & 97.55  & 96.90  & 98.02  & 97.73  & 97.32  \\
    ALDE  & 95.47  & 94.18  & 92.02  & 95.55  & 93.72  & 90.77  \\
    ReLSR & 97.58  & 96.87  & 96.10  & 97.60  & 96.98  & 96.07  \\
    MPDA  & 94.80  & 93.65  & 92.38  & 95.00  & 93.95  & 92.12  \\
    RSLDA & 91.92  & 90.67  & 87.67  & 92.83  & 91.82  & 90.78  \\
    ALPR  & 98.55  & 98.10  & 97.73  & 98.63  & \textbf{98.33} & 97.93  \\
    RLAR  & \textbf{98.67} & \textbf{98.38} & \textbf{98.05} & \textbf{98.72} & 98.32  & \textbf{97.98} \\
    \hline
    \end{tabular}%
  \label{tab:original:occlusion}%
\end{table}%

\subsection{Parameter Sensitivity Analysis}
In this section, we examine the parameter sensitivity of the proposed RLAR, which involves three hyper-parameters, namely the number of neighbors $K$, the regularization parameter $\alpha$, and the tradeoff coefficient $\beta$. In all of the experiments above, we set $K$ to either a fixed 3 or a fixed 7, which we mentioned in our experimental settings. Here we mainly focus on discussing the impact of changes in $\alpha$ and $\beta$ on the performance of the proposed model. The predetermined adjustment coordinate set of these two parameters is set as $\{0.001,0.005,0.01,0.05,0.1,0.5,1 ,10,100,1000\}$. The recognition results versus two parameters on 10 databases are visualized in Fig. \ref{fig:Parameter:sensitivity}, where the value of horizontal and vertical coordinates represents the subscript of the coordinate set, and the number or proportion of training samples per class is indicated in parenthesis of the corresponding caption. It can be observed that the two parameters are not allowed to be too large and not allowed to be too small, and generalized classification performance can be achieved near $[0.1, 0.1]$.
\begin{figure*}
  \centering
  \subfigure[Dermatology (20\%)]{
  \centering
  \includegraphics[scale=0.22]{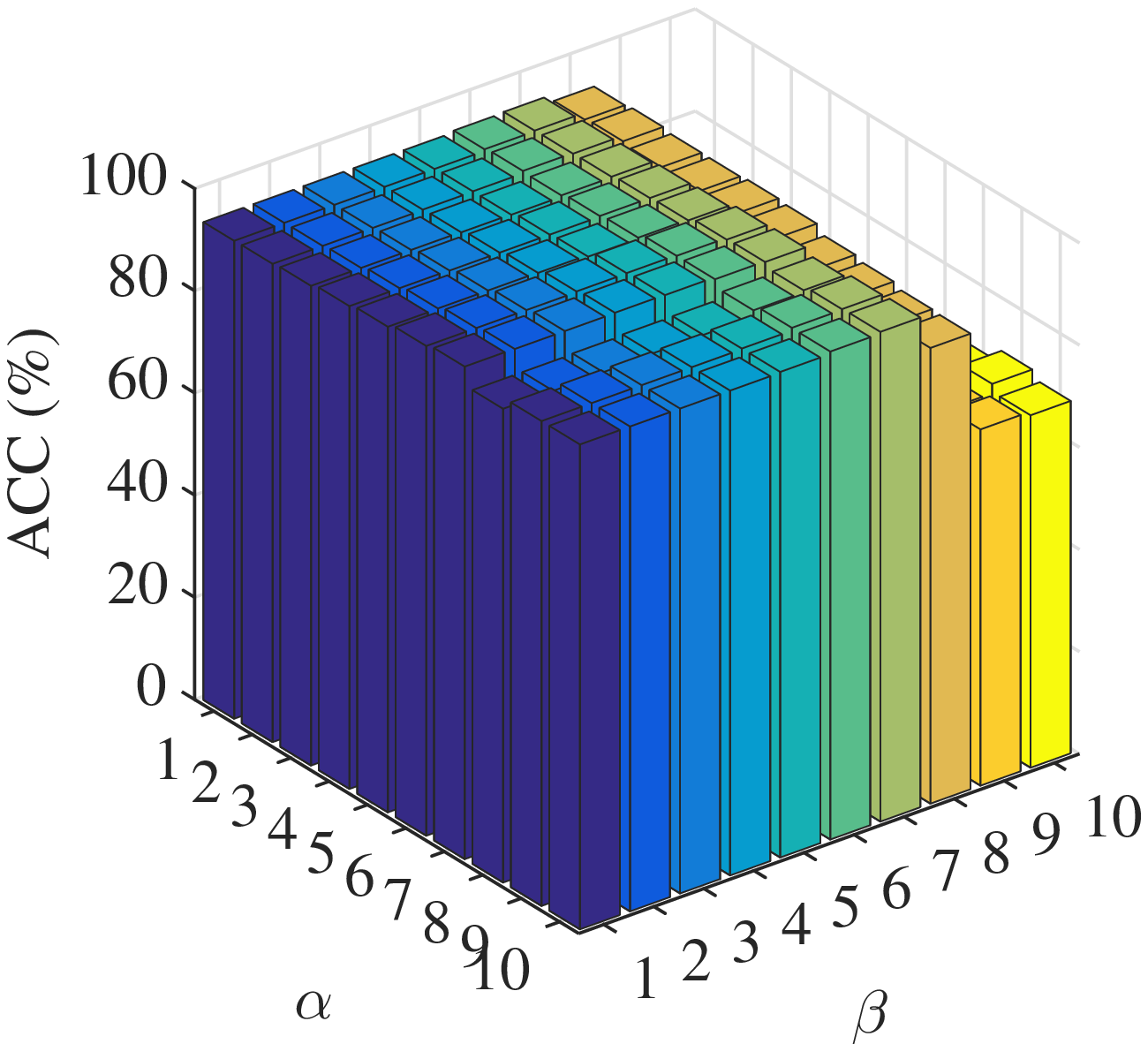}}
  \subfigure[Diabetes (20\%)]{
  \centering
  \includegraphics[scale=0.22]{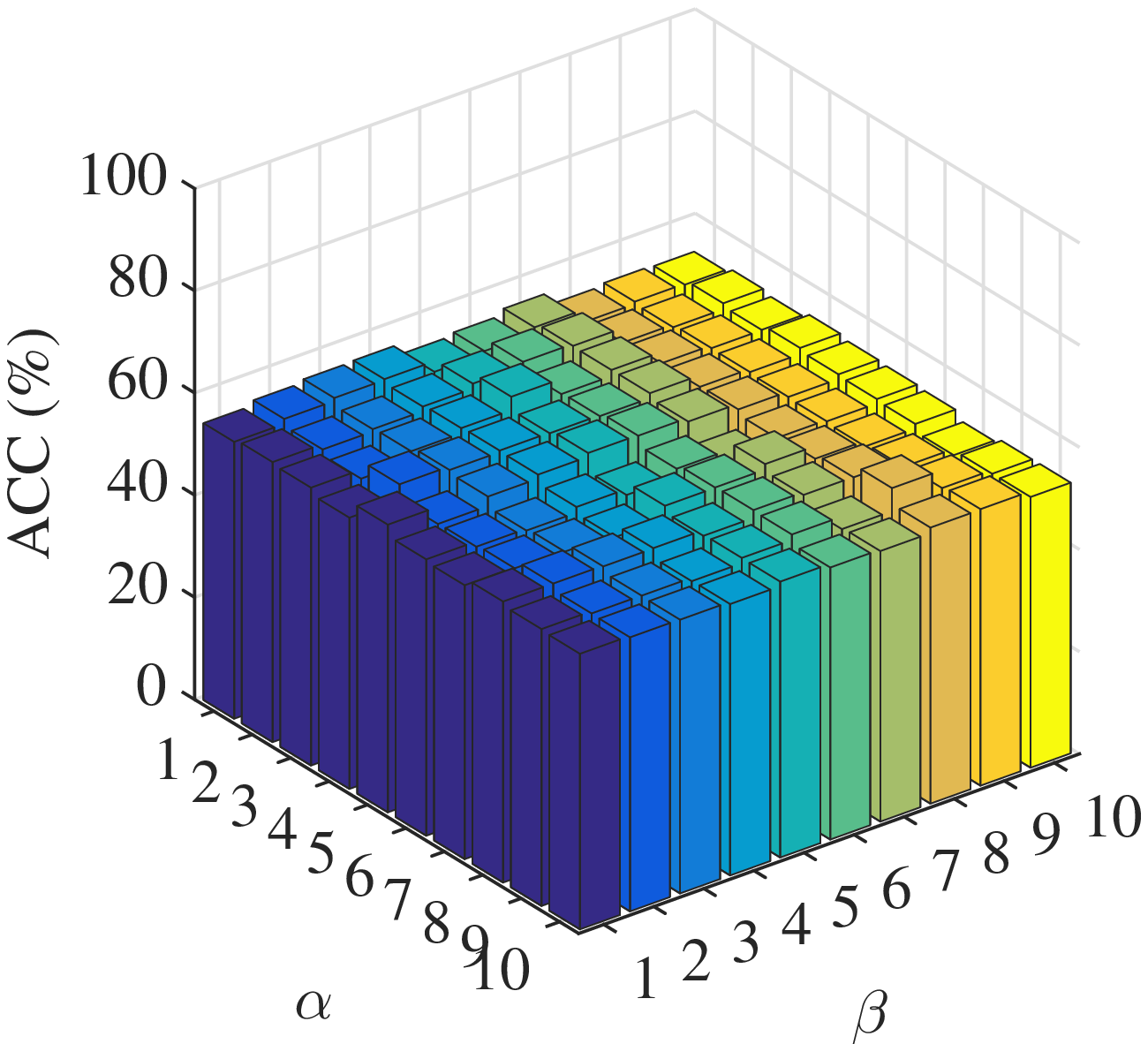}}
  \subfigure[Ionosphere (20\%)]{
  \centering
  \includegraphics[scale=0.22]{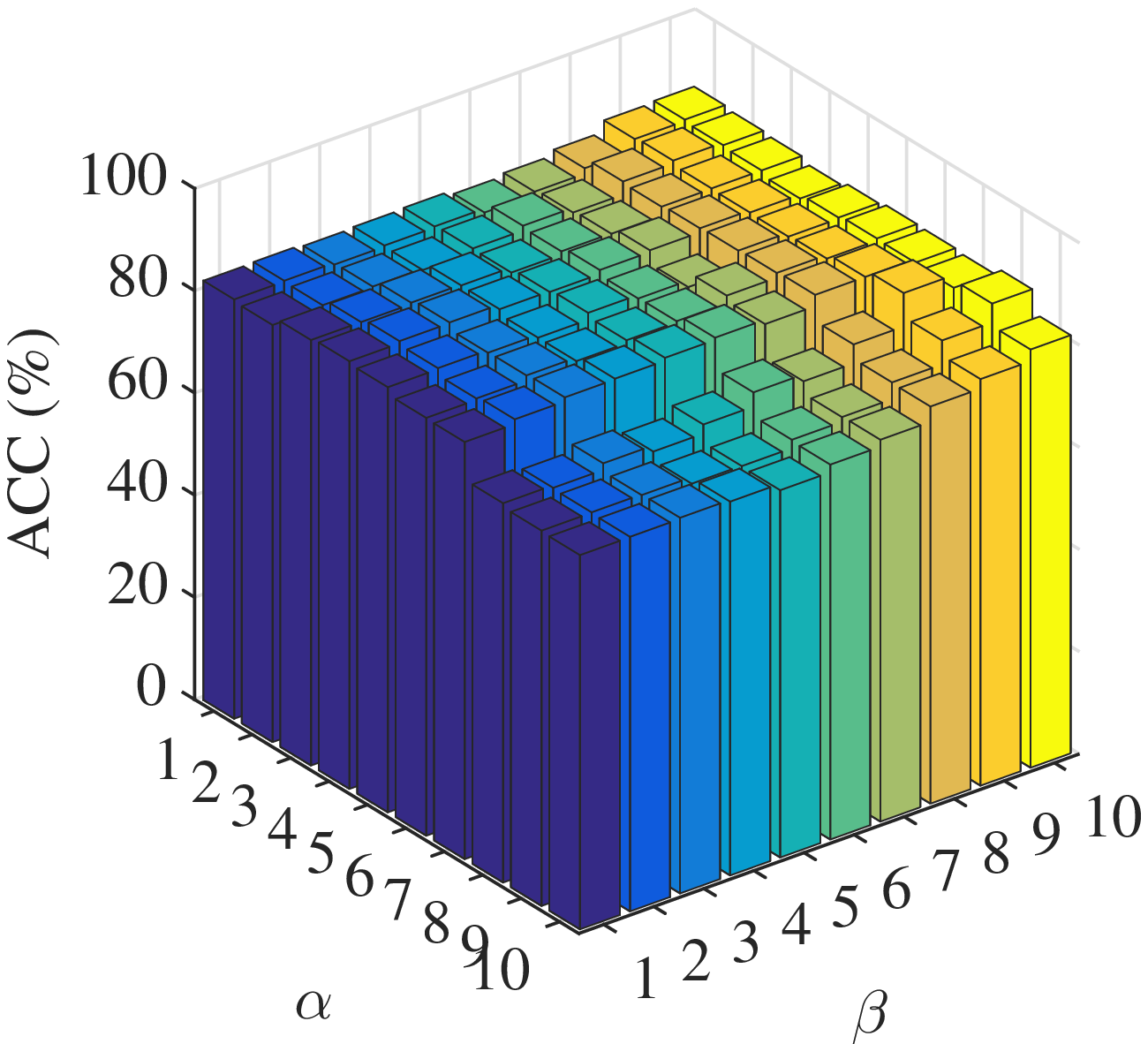}}
  \subfigure[Iris (20\%)]{
  \centering
  \includegraphics[scale=0.22]{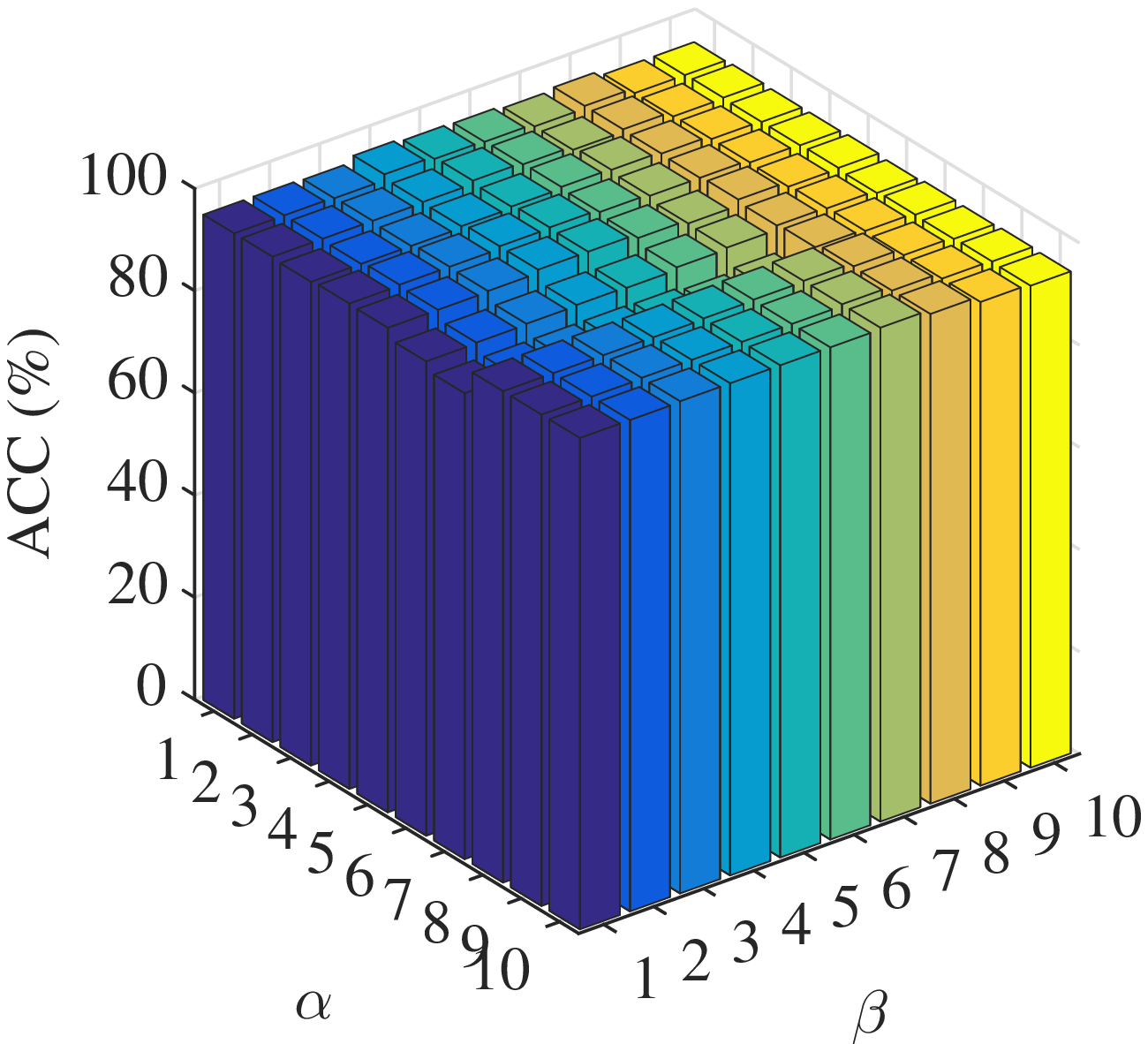}}
  \subfigure[Wine (20\%)]{
  \centering
  \includegraphics[scale=0.22]{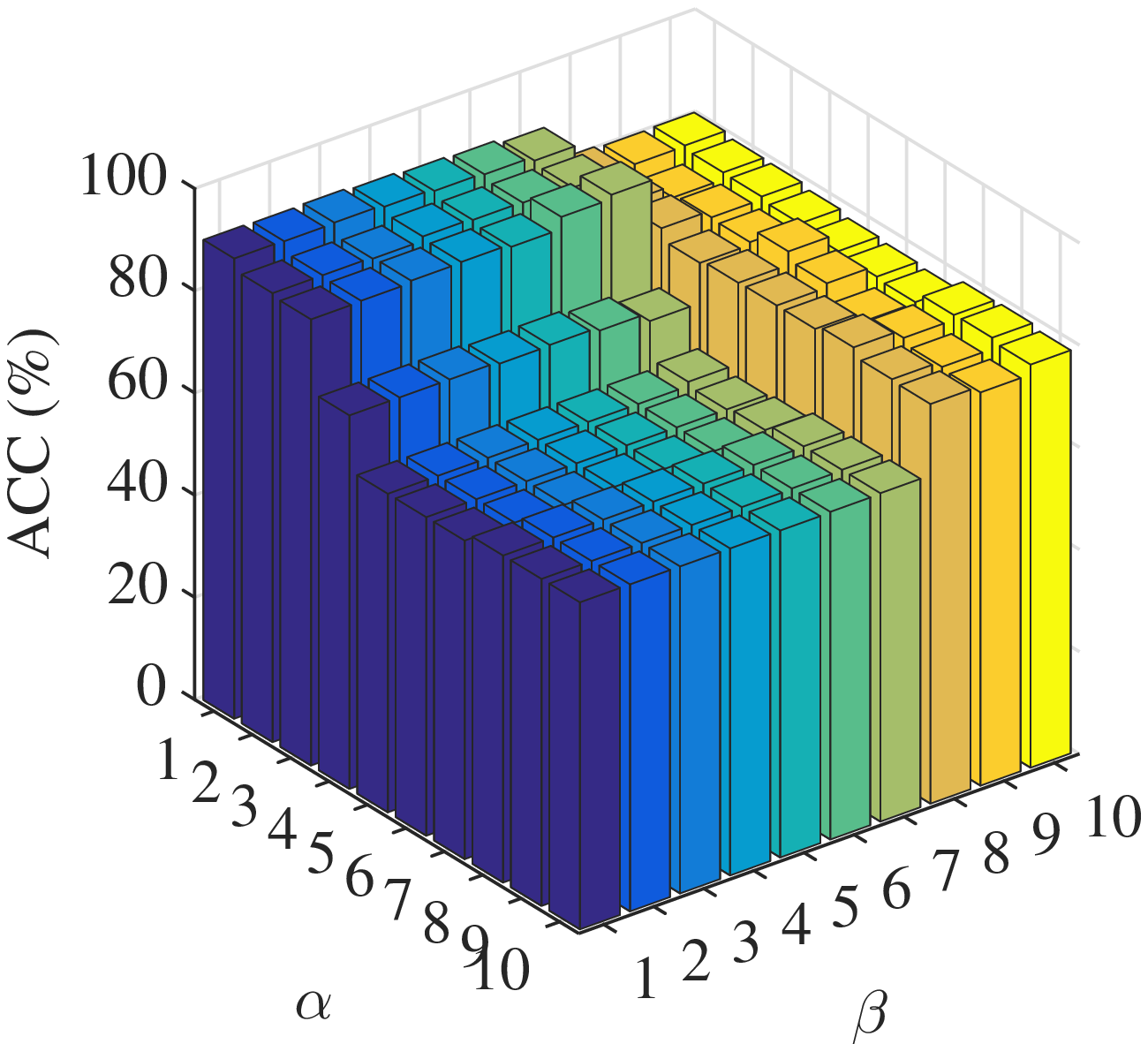}}

  \centering
  \subfigure[Binalpha (\# 19)]{
  \centering
  \includegraphics[scale=0.22]{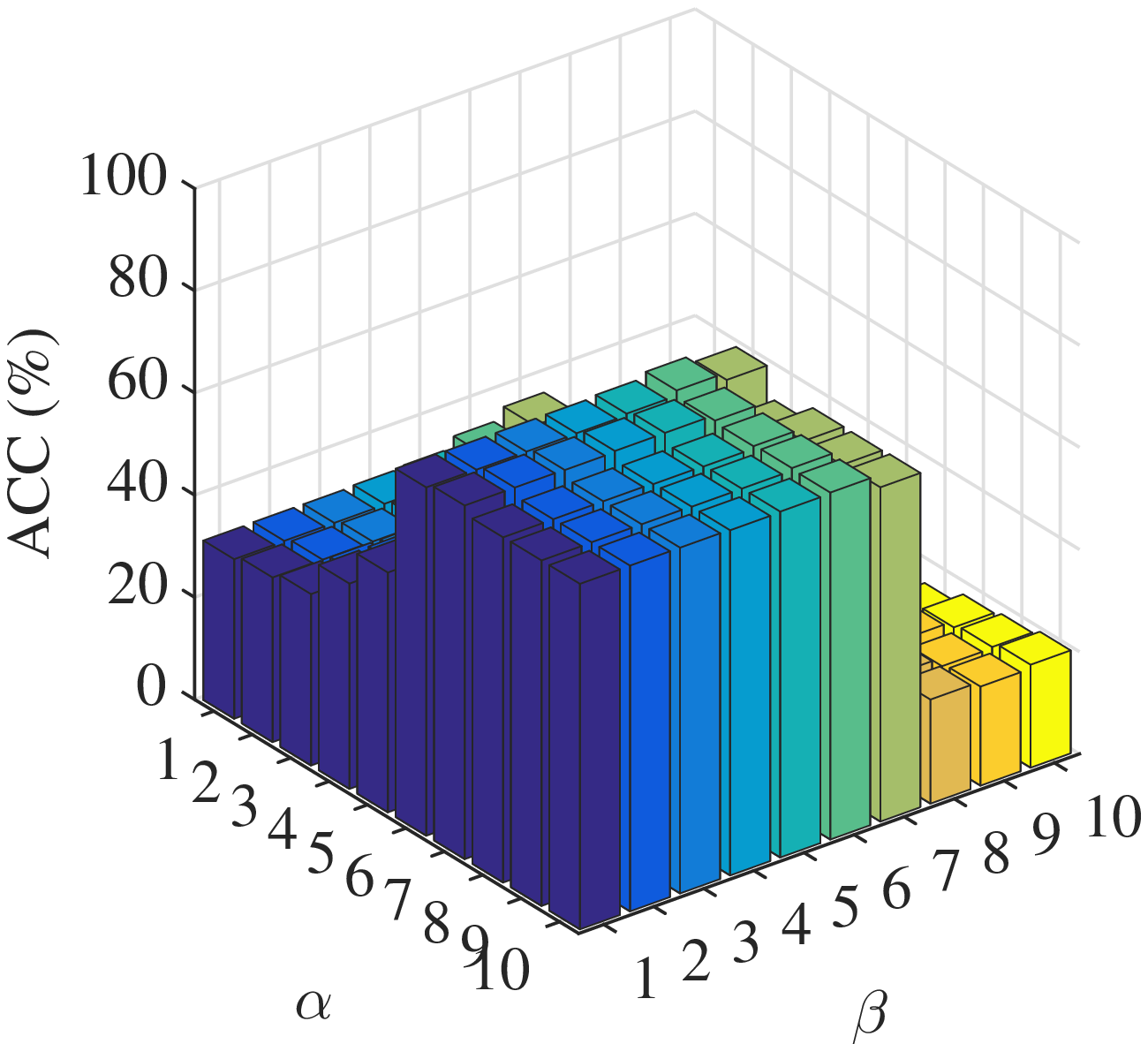}}
  \subfigure[YaleB (\# 30)]{
  \centering
  \includegraphics[scale=0.22]{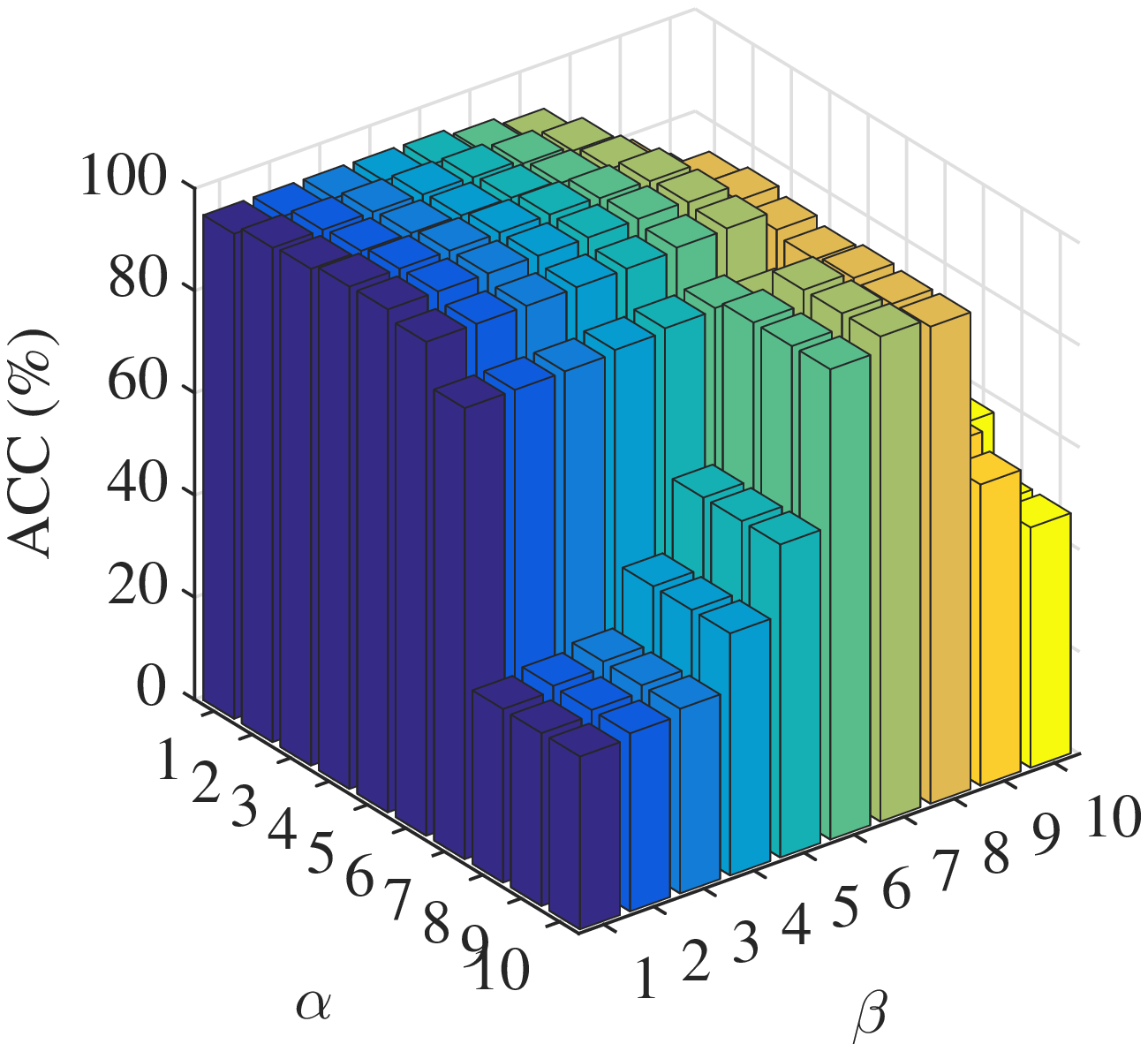}}
  \subfigure[AR (\# 8)]{
  \centering
  \includegraphics[scale=0.22]{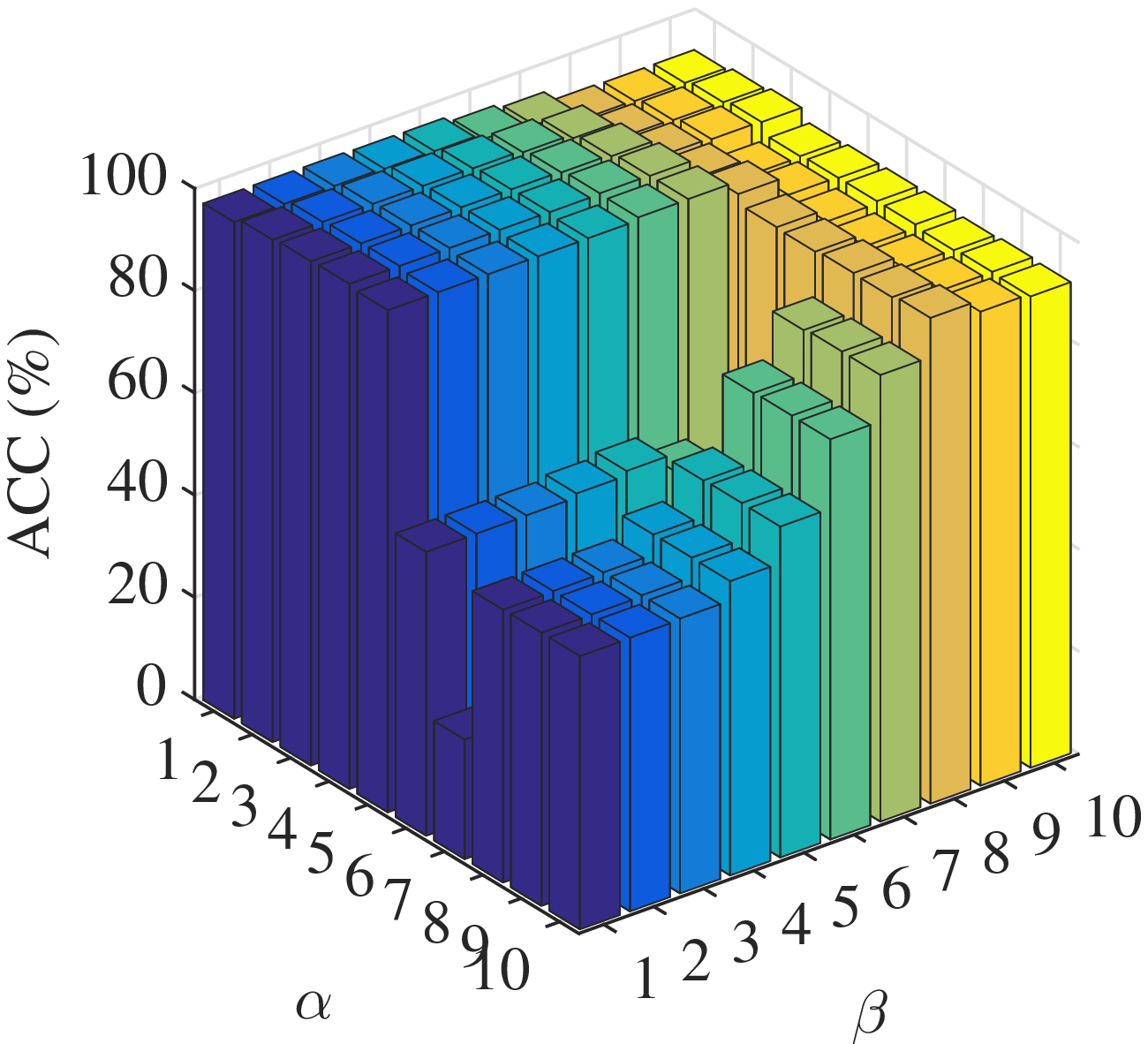}}
  \subfigure[COIL20 (\# 30)]{
  \centering
  \includegraphics[scale=0.22]{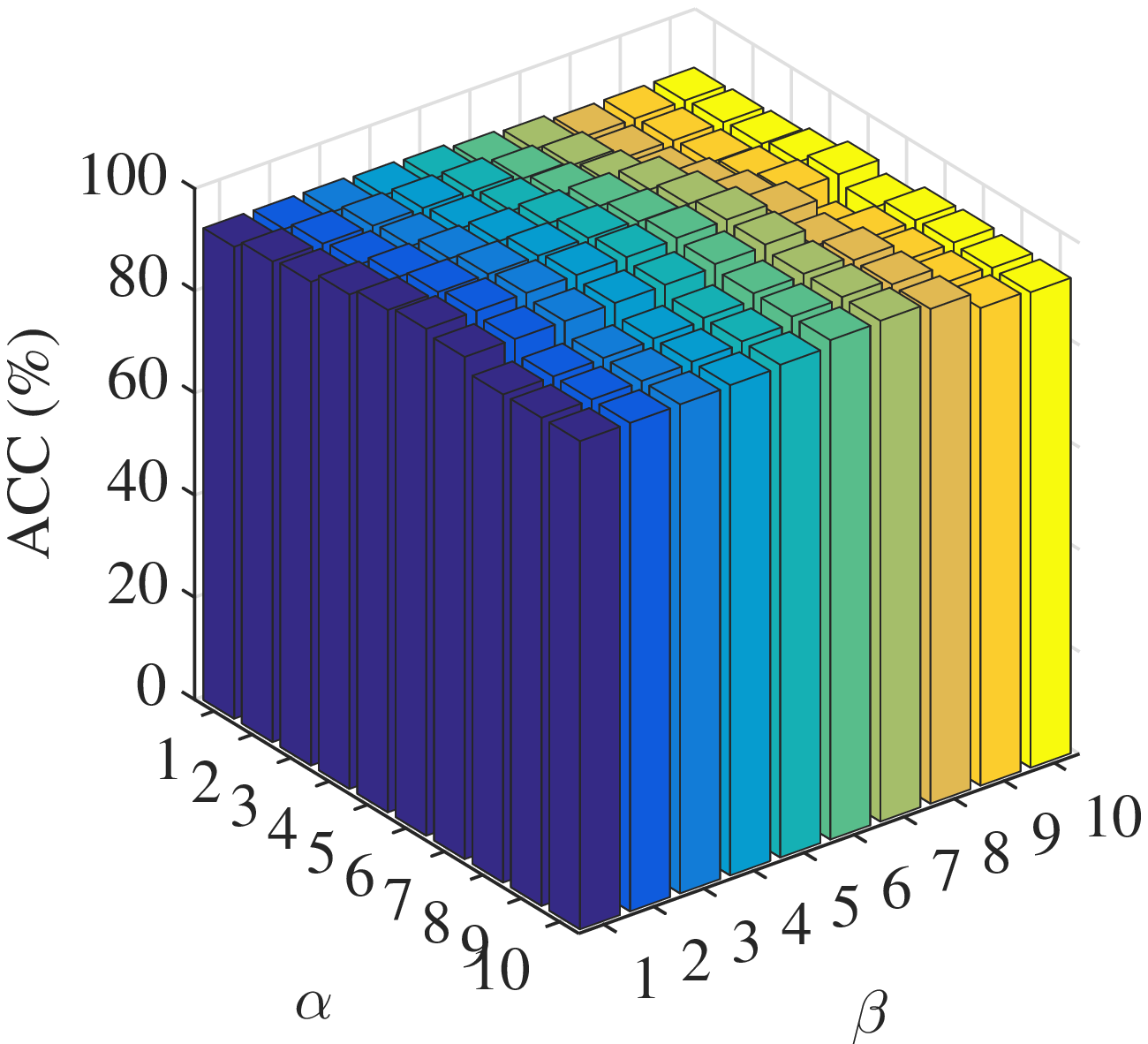}}
  \subfigure[Caltech101 (\# 25)]{
  \centering
  \includegraphics[scale=0.22]{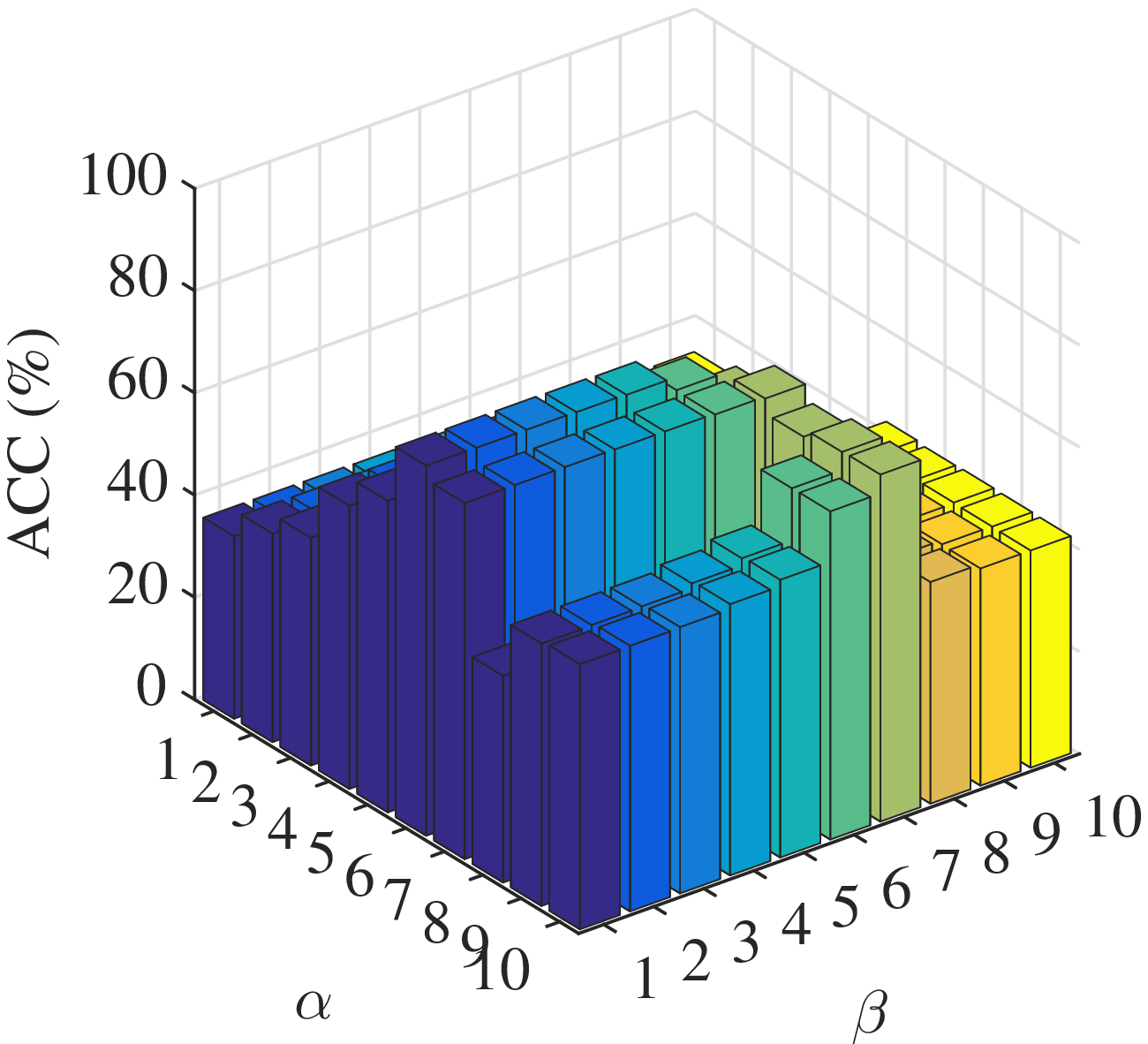}}
  \caption{Classification performance evaluation (\%) of the proposed RLAR versus hyper-parameters $\alpha$ and $\beta$ on ten different databases. }\label{fig:Parameter:sensitivity}
\end{figure*}

\begin{figure}[htpb]
  \centering
  \includegraphics[scale=0.4]{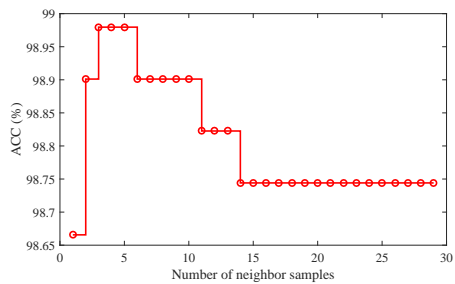}\\
  \caption{Classification performance evaluation (\%) of the proposed RLAR versus the number of neighbor samples $K$ on the YaleB database.}\label{fig:YaleB:K:Selection}
\end{figure}
Besides, we fix the optimal $\alpha$ and $\beta$ obtained through grid search, and observe the effect of the number of neighbors from 1 to 29 on the classification performance on the YaleB database. The recognition results are illustrated in the Fig. \ref{fig:YaleB:K:Selection}, from which it is observed that the classification performance varies slightly with $K$ and reaches the optimal at 3, 4, and 5 neighbors. Moreover, it is acceptable to set $K$ as a fixed value in all the previous experiments.

\subsection{Convergence Study}
The model we built involves multiple variables and is non-smooth, which inspires us to develop an iterative optimization strategy for solving it. The convergence of the optimization algorithm is theoretically guaranteed in Section \ref{algorithm_analysis}. Here, we experimentally verify the convergence performance of the proposed optimization algorithm on 10 databases. Accordingly, we show the convergence curves in Fig. \ref{fig:Convergence}, from which we observe that all convergence curves are indeed monotonically decreasing and tend to flatten within 30 iterations. The validity of the proposed RLAR is also confirmed by the mutual support between theoretical proof and experimental results.
\begin{figure*}
  \centering
  \subfigure[Dermatology (20\%)]{
  \centering
  \includegraphics[scale=0.22]{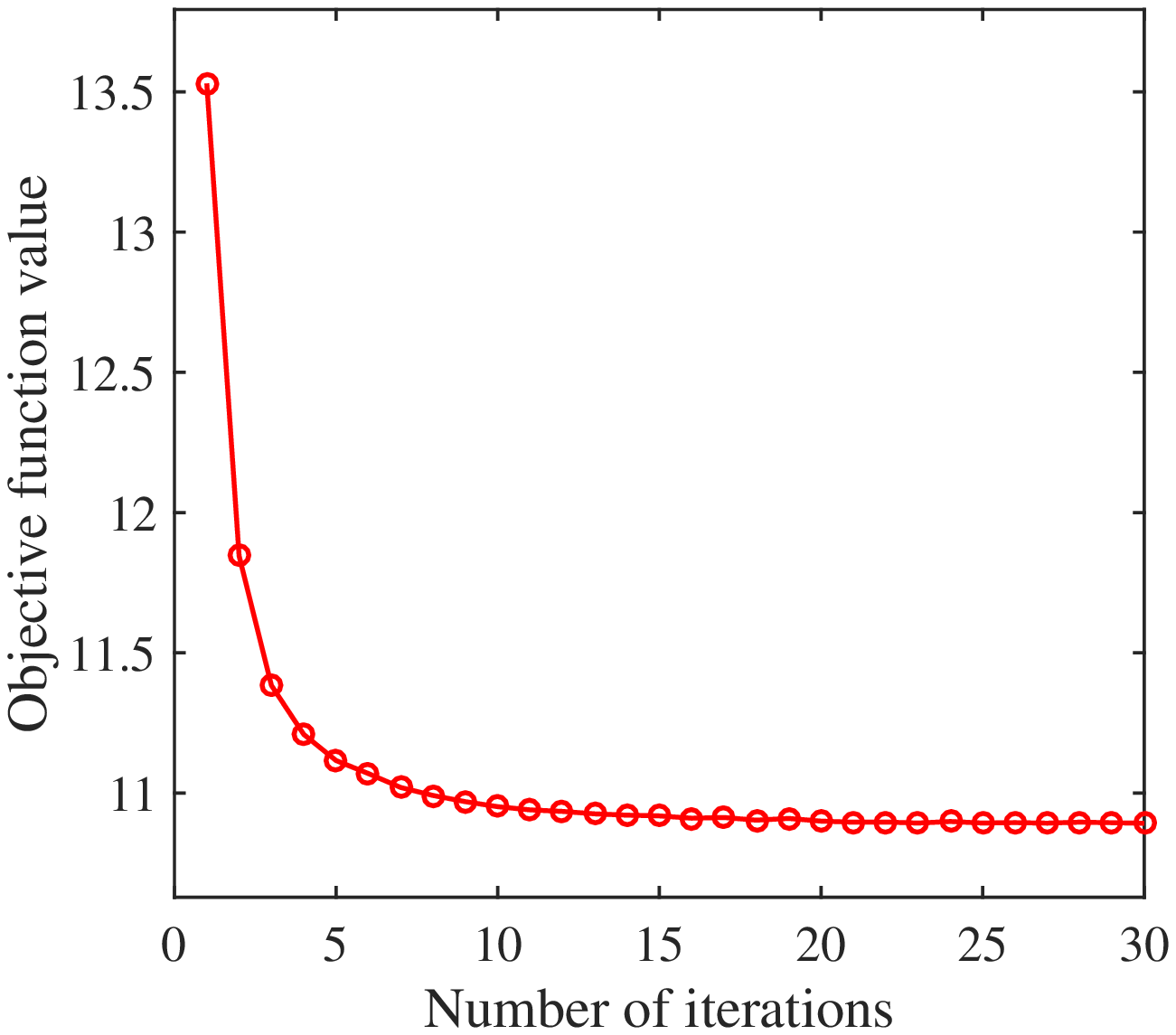}}
  \subfigure[Diabetes (20\%)]{
  \centering
  \includegraphics[scale=0.22]{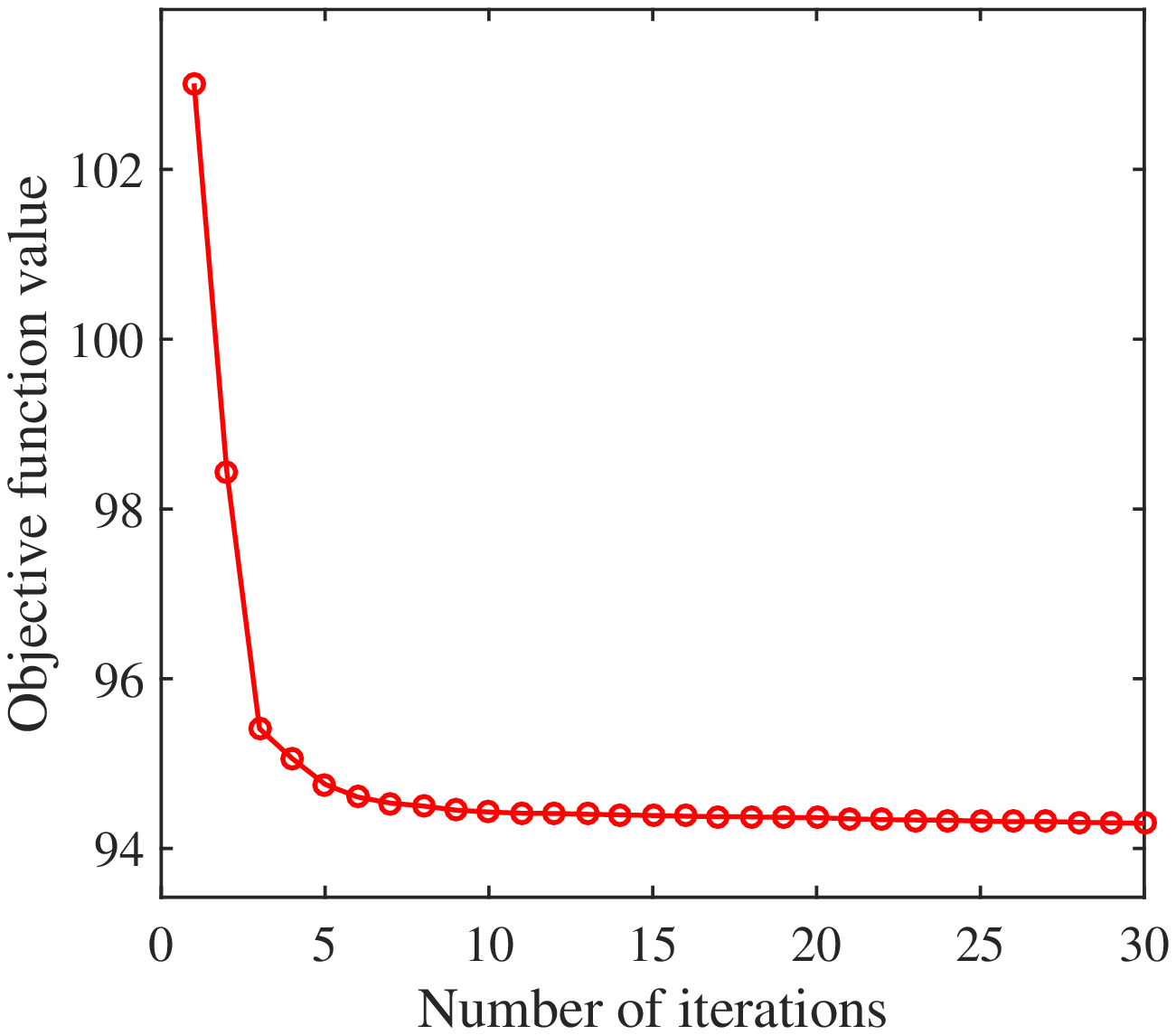}}
  \subfigure[Ionosphere (20\%)]{
  \centering
  \includegraphics[scale=0.22]{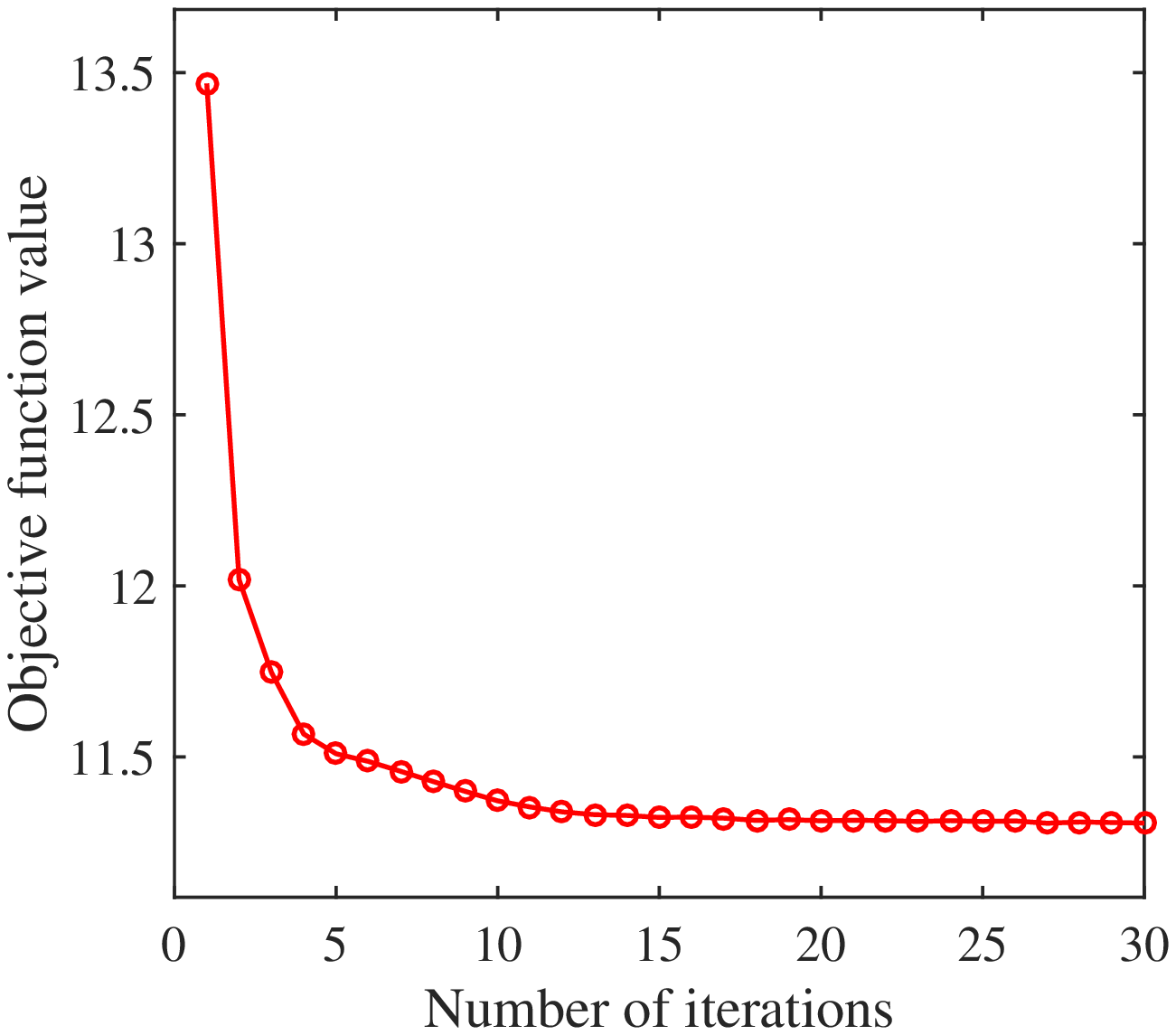}}
  \subfigure[Iris (20\%)]{
  \centering
  \includegraphics[scale=0.22]{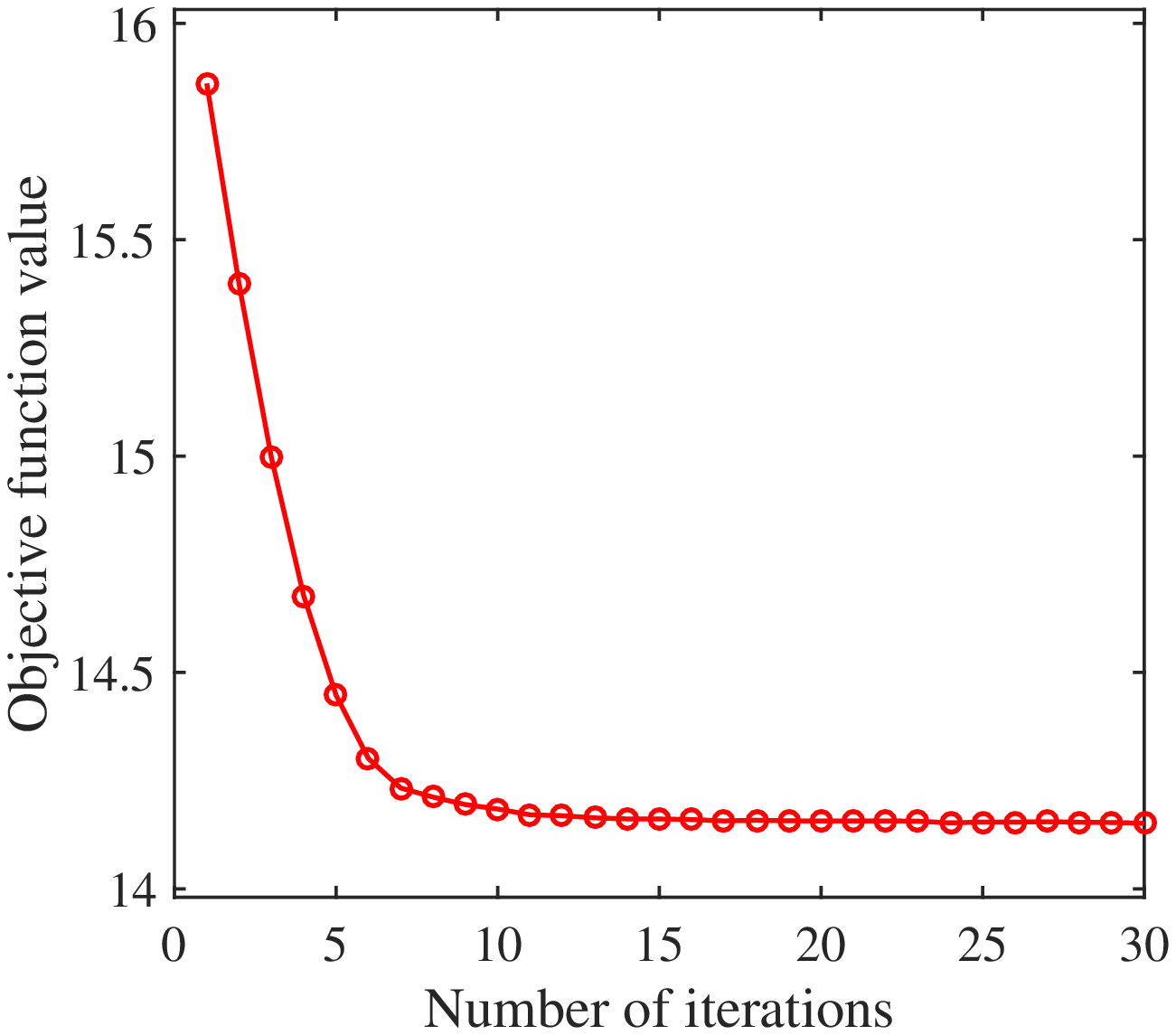}}
  \subfigure[Wine (20\%)]{
  \centering
  \includegraphics[scale=0.22]{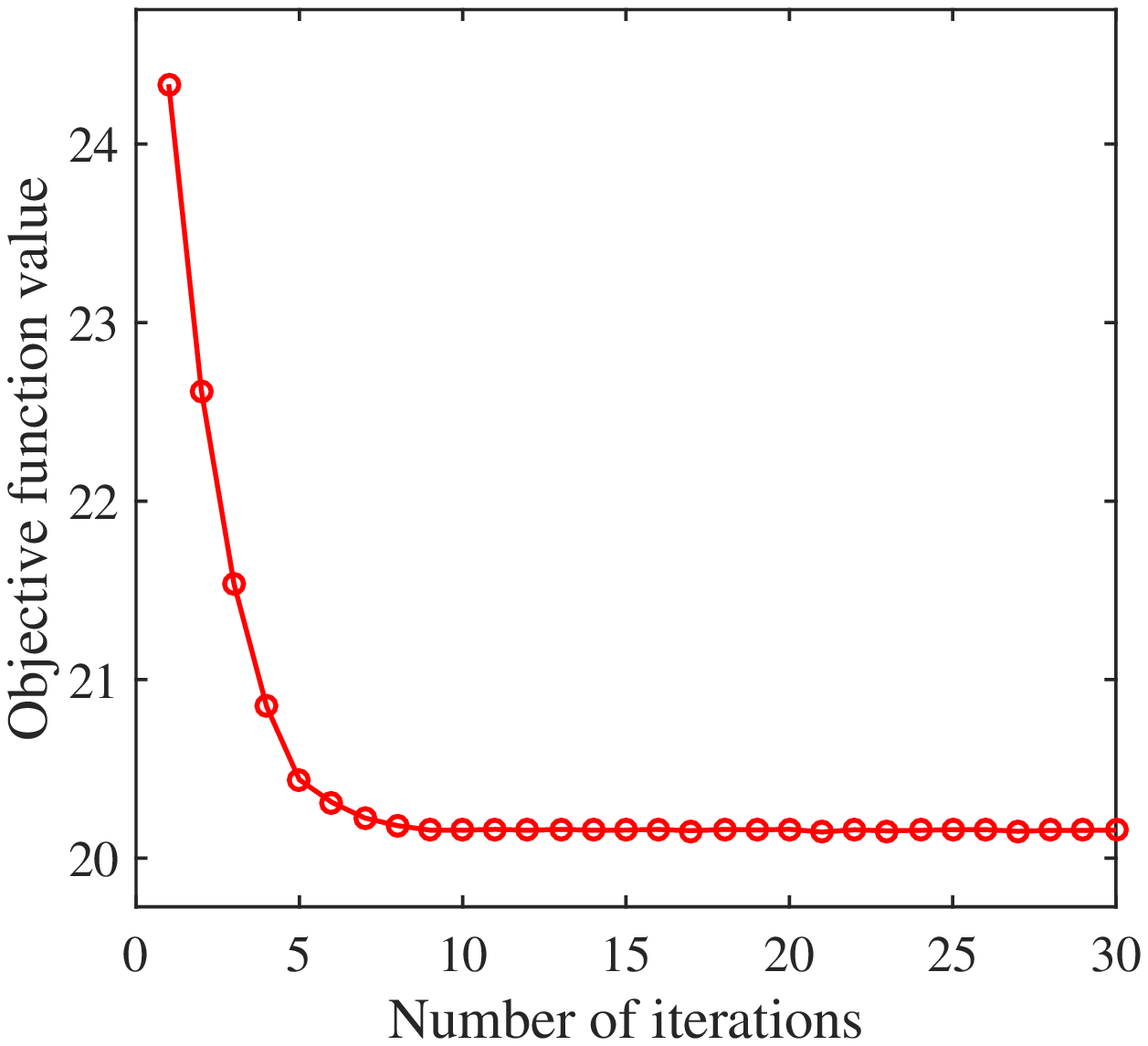}}

  \centering
  \subfigure[Binalpha (\# 19)]{
  \centering
  \includegraphics[scale=0.22]{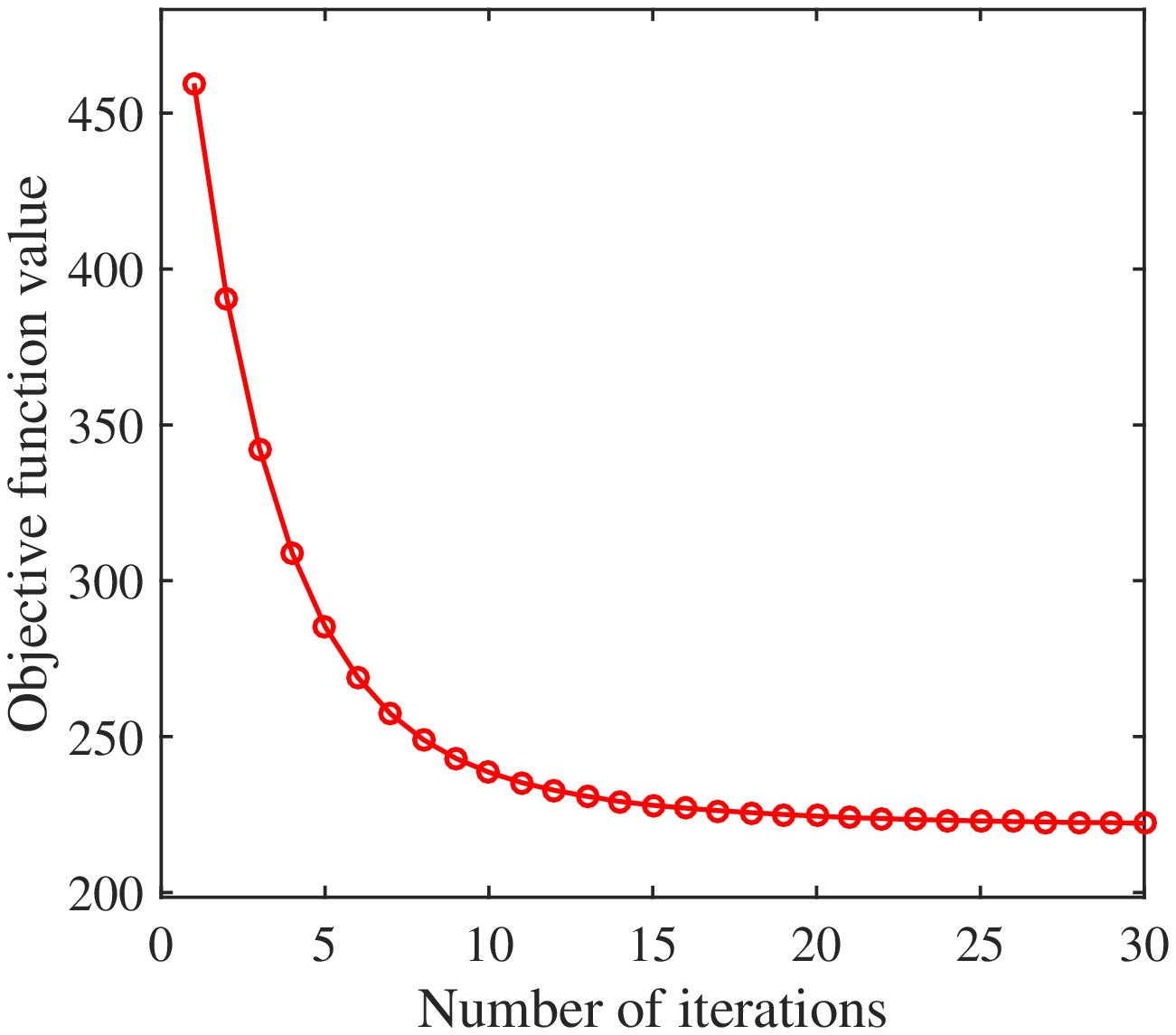}}
  \subfigure[YaleB (\# 30)]{
  \centering
  \includegraphics[scale=0.22]{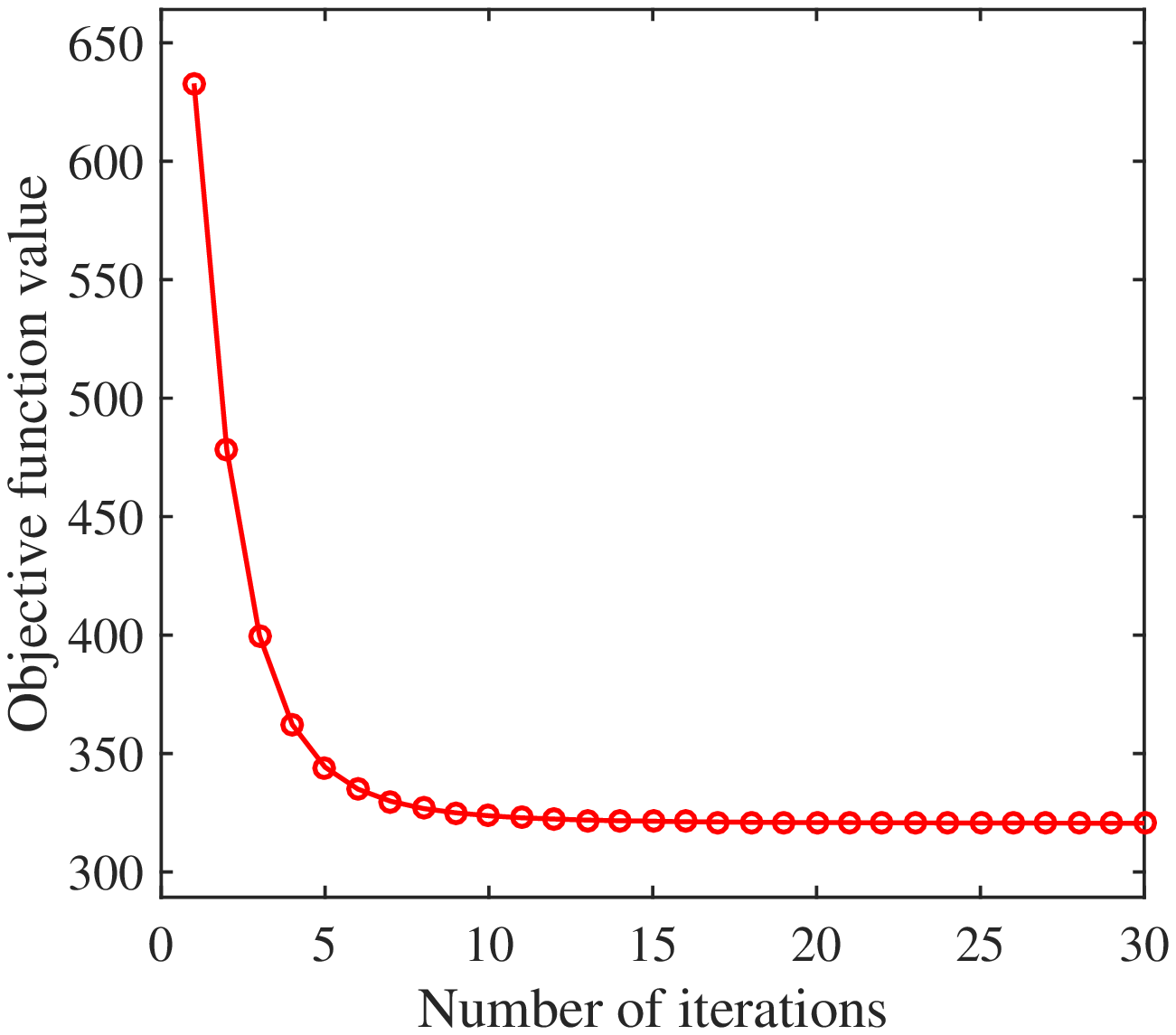}}
  \subfigure[AR (\# 8)]{
  \centering
  \includegraphics[scale=0.22]{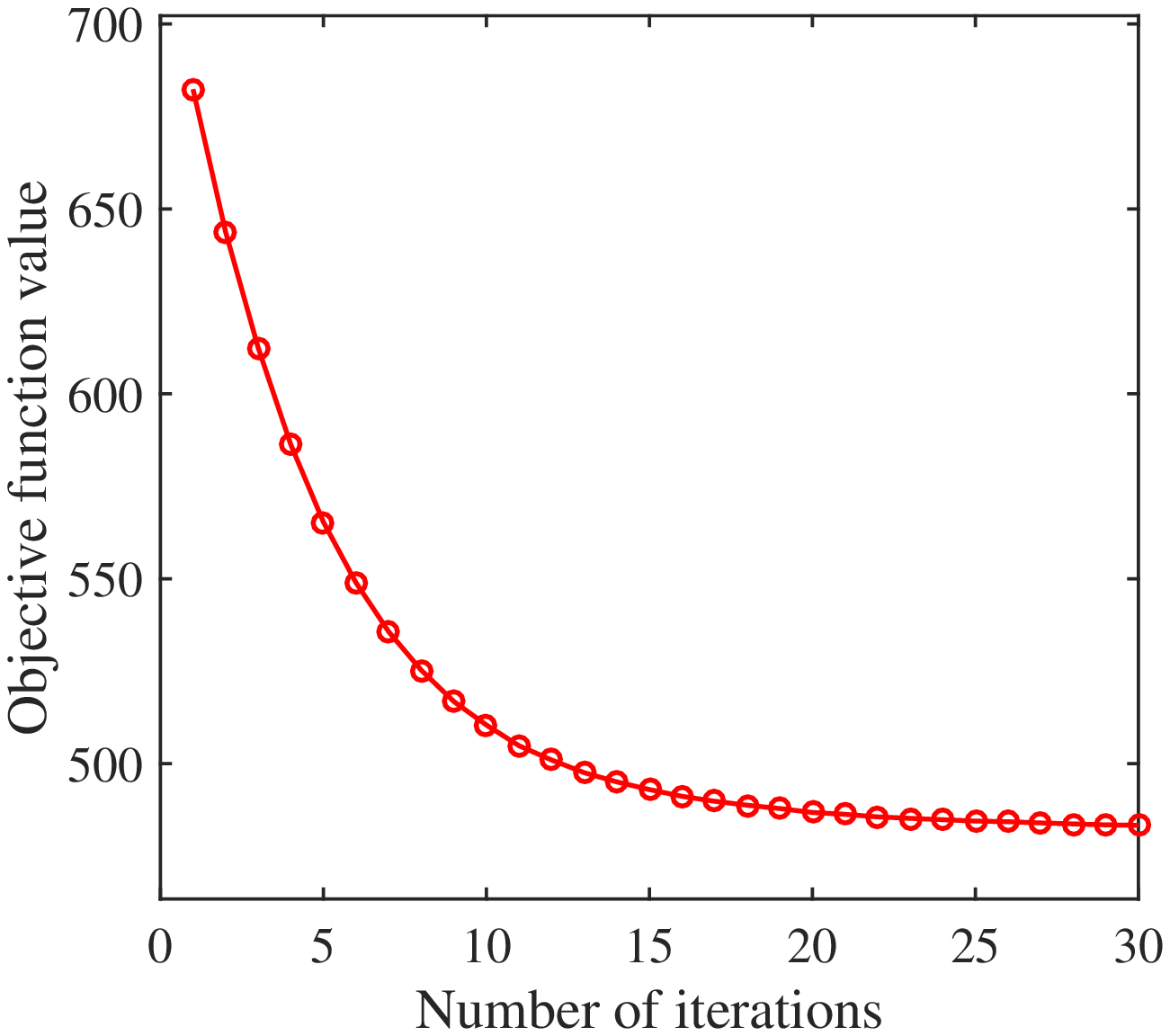}}
  \subfigure[COIL20 (\# 30)]{
  \centering
  \includegraphics[scale=0.22]{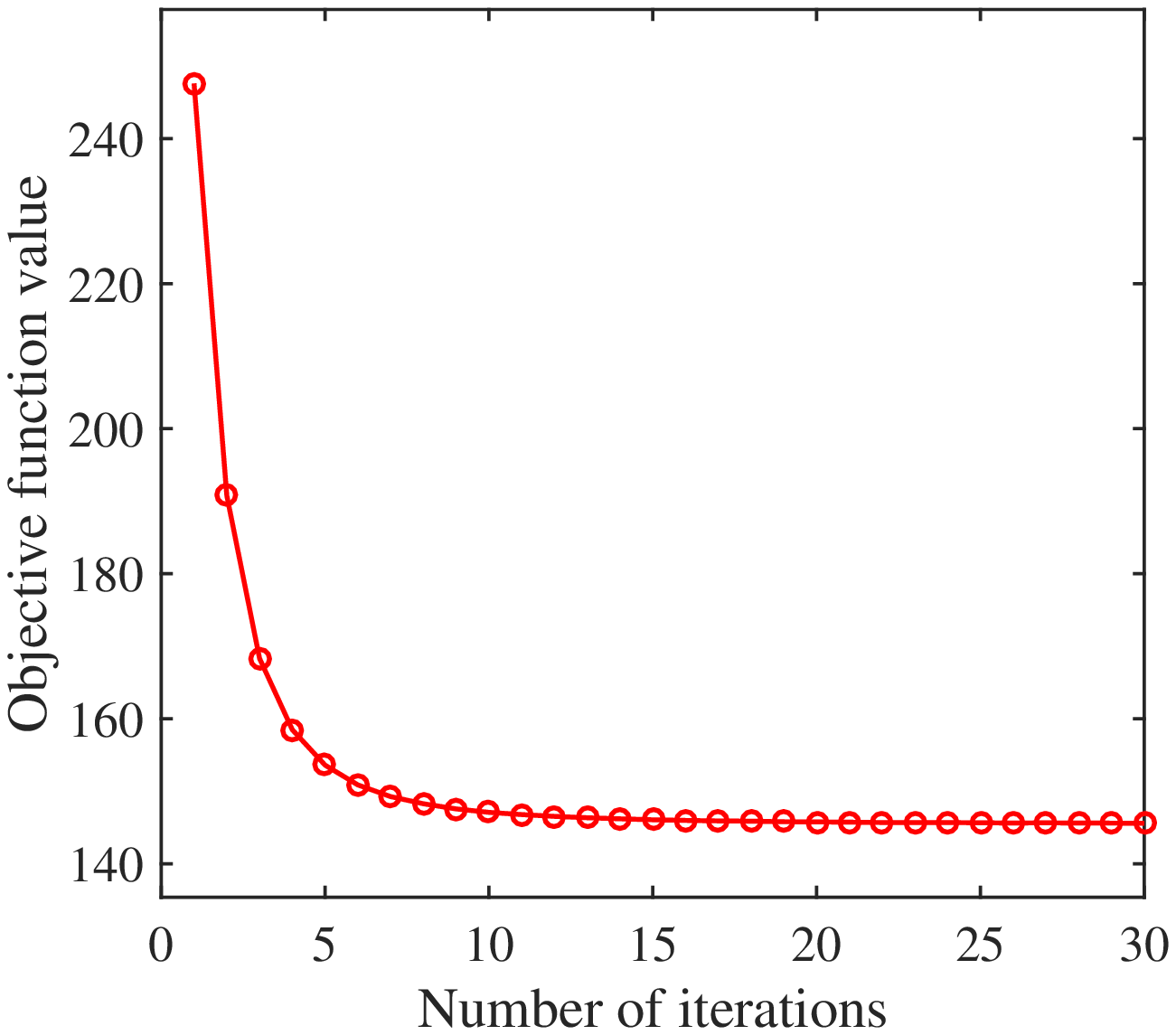}}
  \subfigure[Caltech101 (\# 25)]{
  \centering
  \includegraphics[scale=0.22]{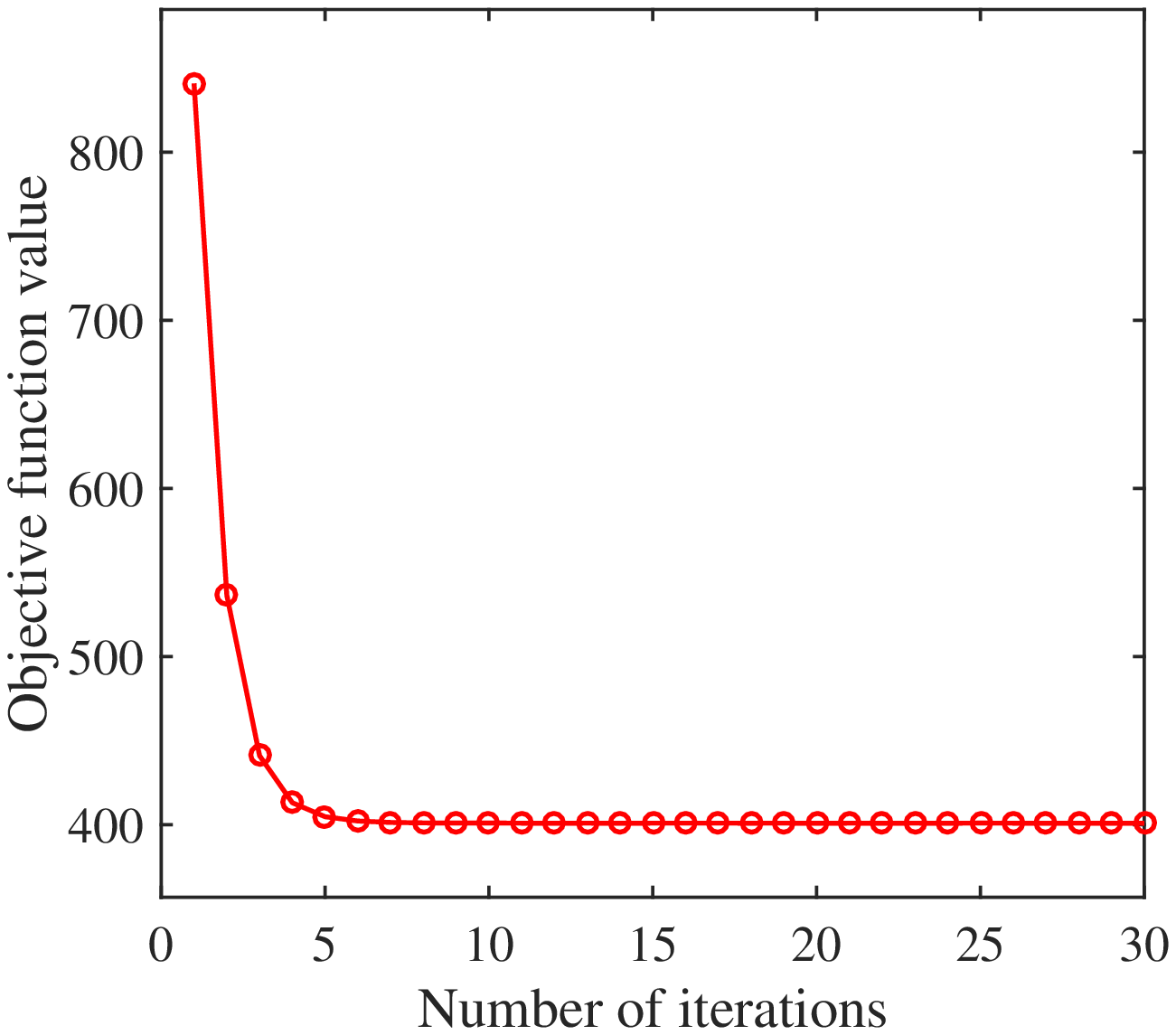}}

  \caption{Convergence curves of the proposed RLAR on ten different databases.}\label{fig:Convergence}
\end{figure*}

\subsection{Analysis of Experimental Results}
Combining theoretical and experimental results, we concentrate on discussing the following insights.
\begin{enumerate}
  \item It can be concluded from the above results in Tables \ref{tab:UCI}-\ref{tab:original:occlusion} that our RLAR surpasses other state-of-the-art approaches and can survive in multiple application scenarios with a relatively efficient and robust classification. And the tuning of the three hyper-parameters that our RLAR handles can be achieved through a simple grid search. The visualization results in Fig.~\ref{fig:Visualization} and the convergence curves in Fig.~\ref{fig:Convergence} experimentally support the aforementioned theoretical assumptions and derivations.
  \item LFDA and NMMP et al. developed the point-to-center loss of LDA as the point-to-point loss, and endowed interlinked samples affinity, breaking through the limit of Gaussian distribution, and the above experimental results indeed confirmed the effectiveness of this move. Inspired by this, we convert the fully-connected intra-class scatter loss of LDA into the loss of partial connection, and effectively overcome the mean dependence and sensitivity to outliers of LDA and its variants by formalizing the affinity between neighbor samples with the non-squared $L_2$ norm.
  \item Generally speaking, our approach performs more stable and efficient than these state-of-the-art discriminant FE methods in classification efficiency, which indicates that the proposed new discriminant criteria are indeed effective and have the stronger ability to extract the discriminant information. Besides, compared with the manifold-inspired method MPDA, the underlying structure of the data preserved more effectively by forcibly separating the submanifolds on which different classes of samples are attached while revealing the overall manifold structure of the data.
  \item The above experimental results demonstrate that multi-classification regression methods RR and ReLSR are relatively resultful for multi-scene recognition, thus suggesting that they are indeed valid in separating samples belonging to different classes in the latent subspace. Moreover, ReLSR is more versatile than RR in most comparisons, which indicates that learning regression targets with large margins of different classes is beneficial to classification performance.
  \item Differ from the conventional methods of depicting the loss function with the squared $L_2$ norm, all the modules in our model are directly measured with the $L_{2,1}$ norm of matrix, which not only enables our model to have the ability of anti-noise, but also can realize the joint process of subspace learning and feature selection. Specially designed sensitivity experiments to outliers also verify the robustness of this strategy.
\end{enumerate}

\section{Conclusion}\label{conclusion}
In this paper, we succeed in achieving a more robust and discriminative low-dimensional representation of data, which is suitable for labeled data classification in multiple scenarios. The proposed model can both adaptively reveal the local structure of the data manifold and flexibly learn the margin representation. All the modules in our model are measured by $L_{2,1}$ norms, thus achieving the joint robust subspace learning and feature selection. Furthermore, we derive an alternate iterative optimization algorithm which is theoretically proved to converge. Extensive experiments conducted on several UCI and other real-world databases have demonstrated the robustness to outliers and classification efficiency of the proposed method. Although we employ retargeted regression here as a way to induce the margin representation, there are certain limitations on the low-dimensional representations, which is where our future work needs to focus.


%

\appendices
\section{Proof of Eq.~(\ref{eq:LDA:intraclass})}\label{LDA:intra-class}
\begin{equation*}
  \begin{split}
  &\sum_{i=1}^c\sum_{j=1}^{n_i}\left\|\bm X^i_j-\bm M_i\right\|_2^2\\
  =&\sum_{i=1}^c\sum_{j=1}^{n_i}(\bm X^i_j-\frac{1}{n_i}\sum_{k=1}^{n_i}\bm X^i_k)^T(\bm X^i_j-\frac{1}{n_i}\sum_{k=1}^{n_i}\bm X^i_k)\\
  =&\sum_{i=1}^c\!\sum_{j=1}^{n_i}({\bm X^i_j}^T\!\bm X^i_j\!-\!\frac{2}{n_i}{\bm X^i_j}^T\!\sum_{k=1}^{n_i}\!\bm X^i_k\!+\!\frac{1}{n_i^2}\!\sum_{j,k=1}^{n_i}\!{\bm X^i_j}^T\!\bm X^i_k)\\
  =&\sum_{i=1}^c(\sum_{j=1}^{n_i}{\bm X^i_j}^T\bm X^i_j-\frac{1}{n_i}\sum_{j,k=1}^{n_i}{\bm X^i_j}^T\bm X^i_k)\\
    =&\sum_{i=1}^c\frac{1}{2n_i}\sum_{j,k=1}^{n_i}({\bm X^i_j}^T\bm X^i_j-2{\bm X^i_j}^T\bm X^i_k+{\bm X^i_k}^T\bm X^i_k)\\
  =&\sum_{i=1}^c\frac{1}{2n_i}\sum_{j,k=1}^{n_i}(\bm X^i_j-\bm X^i_k)^T(\bm X^i_j-\bm X^i_k)\\
 =&\sum_{i=1}^c\sum_{j,k=1}^{n_i}\frac{1}{2n_i}\|\bm X^i_j-\bm X^i_k\|_2^2\\
  \end{split}
\end{equation*}

\section*{Acknowledgment}

The authors would like to thank...

\ifCLASSOPTIONcaptionsoff
  \newpage
\fi


\begin{thebibliography}{1}
\bibitem{Ladla2011} L. Ladha and T. Deepa, "Feature Selection Methods And Algorithms," Int. J. Comput. Eng., vol. 3, no. 5. pp. 1787-1797, 2011.
\bibitem{Ma2018} X. Ma, F. Zhang, Y. Li, and J. Feng, "Robust sparse representation based face recognition in an adaptive weighted spatial pyramid structure," Sci. China Inf. Sci., vol. 61, no. 1. pp. 86-98, 2018.
\bibitem{Passalis2018} N. Passalis and A. Tefas, "Dimensionality reduction using similarity-induced embeddings," IEEE Trans. Neural Netw. Learn. Syst., vol. 29, no. 8, pp. 3429-3441, 2018.
\bibitem{Pang2019} Y. Pang, B. Zhou, and F. Nie, "Simultaneously Learning Neighborship and Projection Matrix for Supervised Dimensionality Reduction," IEEE Trans. neural networks Learn. Syst., vol. 30, no. 9, pp. 2779-2793, 2019.

\bibitem{Khalid2014} S. Khalid, T. Khalil, and S. Nasreen, "A survey of feature selection and feature extraction techniques in machine learning," in Proceedings of 2014 Science and Information Conference, SAI 2014, 2014, pp. 372-378.

\bibitem{Hotelling1933} H. Hotelling, "Analysis of a complex of statistical variables into principal components," J. Educ. Psychol., vol. 24, no. 6, pp. 417-441, 1933.
\bibitem{Fisher1936} R. A. Fisher, "The use of multiple measurements in taxonomic problems," Ann. Eugen., vol. 7, no. 2, pp. 179-188, 1936.
\bibitem{Sugiyama2007} M. Sugiyama, "Dimensionality reduction of multimodal labeled data by local fisher discriminant analysis," J. Mach. Learn. Res., vol. 8, pp. 1027-1061, 2007.
\bibitem{Boutemedjet2009} S. Boutemedjet, N. Bouguila, and D. Ziou, "A hybrid feature extraction selection approach for high-dimensional non-Gaussian data clustering," IEEE Trans. Pattern Anal. Mach. Intell., vol. 31, no. 8, pp. 1429-1443, 2009.
\bibitem{Luo2018} T. Luo, C. Hou, F. Nie, and D. Yi, "Dimension Reduction for Non-Gaussian Data by Adaptive Discriminative Analysis," IEEE Trans. Cybern., vol. 49, no. 3, pp. 933-946, 2019.
\bibitem{Li2017} X. Li, M. Chen, F. Nie, and Q. Wang, "Locality adaptive discriminant analysis," in IJCAI International Joint Conference on Artificial Intelligence, 2017, pp. 2201-2207.

\bibitem{Ye2006} J. Ye and T. Xiong, "Computational and theoretical analysis of null space and orthogonal linear discriminant analysis," J. Mach. Learn. Res., vol. 7, pp. 1183-1204, 2006.
\bibitem{Li2006} H. Li, T. Jiang, and K. Zhang, "Efficient and Robust Feature Extraction by Maximum Margin Criterion," IEEE Trans. Neural Networks, vol. 17, no. 1, pp. 157-165, 2006.

\bibitem{Zhang2016} X. Zhang, D. Chu, and R. C. E. Tan, "Sparse Uncorrelated Linear Discriminant Analysis for Undersampled Problems," IEEE Trans. Neural Networks Learn. Syst., vol. 27, no. 7, pp. 1469-1485, 2016.
\bibitem{Zhao2019} H. Zhao, Z. Wang, and F. Nie, "A New Formulation of Linear Discriminant Analysis for Robust Dimensionality Reduction," IEEE Trans. Knowl. Data Eng., vol. 31, no. 4, pp. 629-640, 2019.
\bibitem{Wen2019} J. Wen et al., "Robust Sparse Linear Discriminant Analysis," IEEE Trans. Circuits Syst. Video Technol., vol. 29, no. 2, pp. 390-403, 2019.

\bibitem{He2004} X. He and P. Niyogi, "Locality preserving projections," in Advances in Neural Information Processing Systems, 2004, vol. 16, no. December, pp. 153-160.
\bibitem{Cai2007} D. Cai, X. He, K. Zhou, J. Han, and H. Bao, "Locality sensitive discriminant analysis," in IJCAI International Joint Conference on Artificial Intelligence, 2007, pp. 708-713.
\bibitem{Nie2007} F. Nie, S. Xiang, and C. Zhang, "Neighborhood MinMax projections," in IJCAI International Joint Conference on Artificial Intelligence, 2007, pp. 993-998.
\bibitem{Fan2011} Z. Fan, Y. Xu, and D. Zhang, "Local linear discriminant analysis framework using sample neighbors," IEEE Trans. Neural Netw., vol. 22, no. 7, pp. 1119-1132, 2011.

\bibitem{Seung2000} H. S. Seung and D. D. Lee, "The manifold ways of perception," Science, vol. 290, no. 5500. pp. 2268-2269, 2000.
\bibitem{Belkin2002} M. Belkin and P. Niyogi, "Laplacian eigenmaps and spectral techniques for embedding and clustering," in Advances in Neural Information Processing Systems, 2002, pp. 585-591.

\bibitem{Nie2019} F. Nie, Z. Wang, R. Wang, and X. Li, "Submanifold-preserving discriminant analysis with an auto-optimized graph," IEEE Trans. Cybern., pp. 1-14, 2019.

\bibitem{Gao2013} Q. Gao, J. Ma, H. Zhang, X. Gao, and Y. Liu, "Stable orthogonal local discriminant embedding for linear dimensionality reduction," IEEE Trans. Image Process., vol. 22, no. 7, pp. 2521-2531, 2013.

\bibitem{Brereton2010} R. G. Brereton and G. R. Lloyd, "Support Vector Machines for classification and regression," Analyst, vol. 135, no. 2. pp. 230-267, 2010.
\bibitem{Hsu2002} C. W. Hsu and C. J. Lin, "A comparison of methods for multiclass support vector machines," IEEE Trans. Neural Networks, vol. 13, no. 2, pp. 415-425, 2002.

\bibitem{Hoerl1970} A. E. Hoerl and R. W. Kennard, "Ridge regression: Biased estimation for nonorthogonal problems," Technometrics, vol. 12, no. 1, pp. 55-67, 1970.
\bibitem{Tibshirani1996} R. Tibshirani, "Regression shrinkage and selection via the lasso," J. R. Stat. Soc. Ser. B, vol. 58, no. 1, pp. 267-288, 1996.
\bibitem{Zou2005} H. Zou and T. Hastie, "Regularization and variable selection via the elastic net," J. R. Stat. Soc. Ser. B Stat. Methodol., vol. 67, no. 2, pp. 301-320, 2005.

\bibitem{Lai2019} Z. Lai, D. Mo, J. Wen, L. Shen, and W. K. Wong, "Generalized Robust Regression for Jointly Sparse Subspace Learning," IEEE Trans. Circuits Syst. Video Technol., vol. 29, no. 3, pp. 756-772, 2019.
\bibitem{Wen2020} J. Wen, Z. Zhong, Z. Zhang, L. Fei, Z. Lai, and R. Chen, "Adaptive Locality Preserving Regression," IEEE Trans. Circuits Syst. Video Technol., vol. 30, no. 1, pp. 75-88, 2020.
\bibitem{Rosipal2001} R. Rosipal and L. J. Trejo, "Kernel partial least squares regression in reproducing kernel hilbert space," J. Mach. Learn. Res., vol. 2, no. 2, pp. 97-123, 2002.
\bibitem{Liu2009} W. Liu, I. Park, Y. Wang, and J. C. Principe, "Extended kernel recursive least squares algorithm," IEEE Trans. Signal Process., vol. 57, no. 10, pp. 3801-3814, 2009.
\bibitem{Chen2016} B. Chen, J. Liang, N. Zheng, and J. C. Pr¨ªncipe, "Kernel least mean square with adaptive kernel size," Neurocomputing, vol. 191, pp. 95-106, 2016.

\bibitem{Xiang2012} S. M. Xiang, F. P. Nie, G. F. Meng, C. H. Pan, and C. S. Zhang, "Discriminative least squares regressions for multiclass classification and feature selection," IEEE Trans. Neural Netw. Learn. Syst., vol. 23, no. 11, pp. 1738-1754, 2012.
\bibitem{Zhang2015} X. Zhang, L. Wang, S. Xiang and C. Liu, "Retargeted Least Squares Regression Algorithm," IEEE Transactions on Neural Networks and Learning Systems, vol. 26, no. 9, pp. 2206-2213, 2015.
\bibitem{Wang2018} L. Wang and C. Pan, "Groupwise retargeted least-squares regression," IEEE Trans. Neural Netw. Learn. Syst., vol. 29, no. 4, pp. 1352-1358, 2018.

\bibitem{Liu2015} S. Liu, L. Feng, and H. Qiao, "Scatter balance: An angle-based supervised dimensionality reduction," IEEE Trans. Neural Networks Learn. Syst., vol. 26, no. 2, pp. 277-289, 2015.
\bibitem{Ke2005} Q. Ke and T. Kanade, "Robust $L_1$ norm factorization in the presence of outliers and missing data by alternative convex programming," in IEEE Computer Society Conference on Computer Vision and Pattern Recognition, CVPR 2005, 2005, vol. I, pp. 739-746.
\bibitem{Ding2006} C. Ding, D. Zhou, X. He, and H. Zha, "R1-PCA: Rotational invariant L1-norm principal component analysis for robust subspace factorization," in ACM International Conference Proceeding Series, 2006, vol. 148, pp. 281-288.
\bibitem{Kwak2008} N. Kwak, "Principal component analysis based on L1-norm maximization," IEEE Trans. Pattern Anal. Mach. Intell., vol. 30, no. 9, pp. 1672-1680, 2008.
\bibitem{Li2010} X. Li, W. Hu, H. Wang, and Z. Zhang, "Linear discriminant analysis using rotational invariant L1 norm," Neurocomputing, vol. 73, no. 13¨C15, pp. 2571-2579, 2010.
\bibitem{Clemmensen2011} L. Clemmensen, T. Hastie, D. Witten, and B. Ersboll, "Sparse discriminant analysis," Technometrics, vol. 53, no. 4, pp. 406-413, 2011.
\bibitem{Zhong2013} F. Zhong and J. Zhang, "Linear discriminant analysis based on L1-norm maximization," IEEE Trans. Image Process., vol. 22, no. 8, pp. 3018-3027, 2013.
\bibitem{Wang2014} H. Wang, X. Lu, Z. Hu, and W. Zheng, "Fisher discriminant analysis with L1-norm," IEEE Trans. Cybern., vol. 44, no. 6, pp. 828-842, 2014.


\bibitem{Yi2017} S. Yi, Z. Lai, Z. He, Y. ming Cheung, and Y. Liu, "Joint sparse principal component analysis," Pattern Recognit., vol. 61, pp. 524-536, 2017.
\bibitem{Yi2019} S. Yi, Z. He, X.-Y. Jing, Y. Li, Y.-M. Cheung, and F. Nie, "Adaptive Weighted Sparse Principal Component Analysis for Robust Unsupervised Feature Selection," IEEE Trans. Neural Networks Learn. Syst., pp. 1-11, 2019.

\bibitem{Shi2014} X. Shi, Y. Yang, Z. Guo, and Z. Lai, "Face recognition by sparse discriminant analysis via joint L2,1-norm minimization," Pattern Recognit., vol. 47, no. 7, pp. 2447-2453, 2014.

\bibitem{Raudys1991} S. J. Raudys and A. K. Jain, "Small Sample Size Effects in Statistical Pattern Recognition: Recommendations for Practitioners," IEEE Trans. Pattern Anal. Mach. Intell., vol. 13, no. 3, pp. 252-264, 1991.



\bibitem{Nie2010} F. Nie, H. Huang, X. Cai, and C. Ding, "Efficient and robust feature selection via joint L2;1-norms minimization," in Advances in Neural Information Processing Systems 23: 24th Annual Conference on Neural Information Processing Systems 2010, NIPS 2010, 2010.

\bibitem{Zhang2018} Z. Zhang, L. Shao, Y. Xu, L. Liu, and J. Yang, "Marginal Representation Learning with Graph Structure Self-Adaptation," IEEE Trans. Neural Networks Learn. Syst., vol. 29, no. 10, pp. 4645-4659, 2018.

\bibitem{Zhou2017} Y. Zhou and S. Sun, "Manifold partition discriminant analysis," IEEE Trans. Cybern., vol. 47, no. 4, pp. 830-840, 2017.

\bibitem{Georghiades2001} A. S. Georghiades, P. N. Belhumeur, and D. Kriegman, "From few to many: Illumination cone models for face recognition under variable lighting and pose," IEEE Trans. Pattern Anal. Mach. Intell., vol. 23, no. 6, pp. 643-660, Jun. 2001.

\bibitem{Martinez1998} A. Martinez and R. Benavente, "The AR face database," CVC, New Delhi, India, Tech. Rep. 24, 1998.


\end{thebibliography}
\end{document}